\newtheorem{theo}{Theorem}[section]
\newtheorem{defi}{Definition}[section]
\newtheorem{lemm}{Lemma}[section]
 \newtheorem{rem}{Remark}[section]
\newcommand{\COMM}[2]{{
\ifthenelse{\equal{#1}{AT}}{\color{red}}{
\ifthenelse{\equal{#1}{HO}}{\color{blue}}{
\ifthenelse{\equal{#1}{**}}{\color{green}}}}
[#1: #2]
}}
  \providecommand\BibTeX{{%
    \normalfont B\kern-0.5em{\scshape i\kern-0.25em b}\kern-0.8em\TeX}}}
\begin{document}

%%
%% The "title" command has an optional parameter,
%% allowing the author to define a "short title" to be used in page headers.
\title{Convex Fairness Constrained Model \\ Using Causal Effect Estimators}

%%
%% The "author" command and its associated commands are used to define
%% the authors and their affiliations.
%% Of note is the shared affiliation of the first two authors, and the
%% "authornote" and "authornotemark" commands
%% used to denote shared contribution to the research.
%\author{Ben Trovato}
%\authornote{Both authors contributed equally to this research.}
%\email{trovato@corporation.com}
%\orcid{1234-5678-9012}
%\author{G.K.M. Tobin}
%\authornotemark[1]
%\email{webmaster@marysville-ohio.com}
%\affiliation{%
%  \institution{Institute for Clarity in Documentation}
%  \streetaddress{P.O. Box 1212}
%  \city{Dublin}
%  \state{Ohio}
%  \postcode{43017-6221}
%}

\author{Hikaru Ogura}
\email{hikaru\_ogura@mist.i.u-tokyo.ac.jp}
\affiliation{%
\institution{The University of Tokyo}
\country{Japan}
}

\author{Akiko Takeda}
\email{takeda@mist.i.u-tokyo.ac.jp}
\affiliation{%
\institution{The University of Tokyo / RIKEN}
\country{Japan}
}

%%
%% By default, the full list of authors will be used in the page
%% headers. Often, this list is too long, and will overlap
%% other information printed in the page headers. This command allows
%% the author to define a more concise list
%% of authors' names for this purpose.

%\renewcommand{\shortauthors}{Trovato and Tobin, et al.}

%%
%% The abstract is a short summary of the work to be presented in the
%% article.
\begin{abstract}
    Recent years have seen much research on fairness in machine learning.
    Here, mean difference (MD) or demographic parity is one of the most popular measures of fairness.
    However, MD quantifies not only discrimination but also {\em explanatory bias} which is the difference of outcomes
    justified by explanatory features.
    In this paper, we devise novel models, called FairCEEs, which remove discrimination while keeping explanatory bias.
The models are based on estimators of causal effect utilizing propensity score analysis.
%    and which remove discrimination while keeping explanatory bias.
    We prove that FairCEEs with the squared loss theoretically outperform a naive MD constraint model.
    We provide an efficient algorithm for solving FairCEEs in regression and binary classification tasks.
    In our experiment on synthetic and real-world data in these two tasks, FairCEEs outperformed an existing model that considers explanatory bias in specific cases.
\end{abstract}

\begin{CCSXML}
<ccs2012>
<concept>
<concept_id>10010405.10010455</concept_id>
<concept_desc>Applied computing~Law, social and behavioral sciences</concept_desc>
<concept_significance>500</concept_significance>
</concept>
<concept>
<concept_id>10010147.10010257</concept_id>
<concept_desc>Computing methodologies~Machine learning</concept_desc>
<concept_significance>500</concept_significance>
</concept>
</ccs2012>
\end{CCSXML}

\ccsdesc[500]{Applied computing~Law, social and behavioral sciences}
\ccsdesc[500]{Computing methodologies~Machine learning}

%%
%% Keywords. The author(s) should pick words that accurately describe
%% the work being presented. Separate the keywords with commas.
\keywords{fairness, propensity score, explanatory bias, causality, supervised learning}

%% A "teaser" image appears between the author and affiliation
%% information and the body of the document, and typically spans the
%% page.
%%
%% This command processes the author and affiliation and title
%% information and builds the first part of the formatted document.
\maketitle

\section{Introduction}\label{chap:introduction}
%\COMM{AT}{About CCS Concepts: and also check https://arxiv.org/pdf/1908.09635.pdf}
%\COMM{HO}{I added CCS Concepts.}

Recently much research on fairness in machine learning has been conducted. While numerous companies,
scientific communities, and public organizations are collecting huge data repositories,
the data tend to be biased toward an individual or a group based on their inherent or acquired characteristics, which are called {\em sensitive features}, such as race or gender.
Machine learning models trained on such biased datasets may produce discriminatory predictions with respect to the sensitive feature.
The problem of biased datasets is crucial to ensuring fairness when applying machine learning algorithms to applications such as loan assignment \cite{EO1}, criminal risk assessment \cite{criminal_risk}, and school admissions \cite{school_admission}.
It is necessary to develop fair decision-making methods that avoid discrimination arising from biased datasets.

Regarding fairness in machine learning, discrimination is considered
 either direct or indirect \cite{Calders2010}.
Direct discrimination is caused by a sensitive feature such as race or gender, while indirect discrimination is caused by non-sensitive features which are strongly correlated with the sensitive feature.
Removing the sensitive feature from the input removes direct discrimination, but does not remove indirect discrimination.
This is called the red lining effect \cite{Calders2010}.
Furthermore, as described in \cite{Calders2013}, the non-sensitive features that cause indirect discrimination
sometimes include features which cause bias but can be justified.
These features are called explanatory features and the bias caused by the explanatory features are called explanatory bias.
We treat such explanatory bias as non-discriminatory by following the work of \cite{Calders2013}, though
explanatory bias is included in indirect discrimination.
Figure \ref{fig:detail_disc} shows the relationship between discrimination and explanatory bias.

\begin{figure}[t]
\centering
\includegraphics[width=0.8\linewidth]{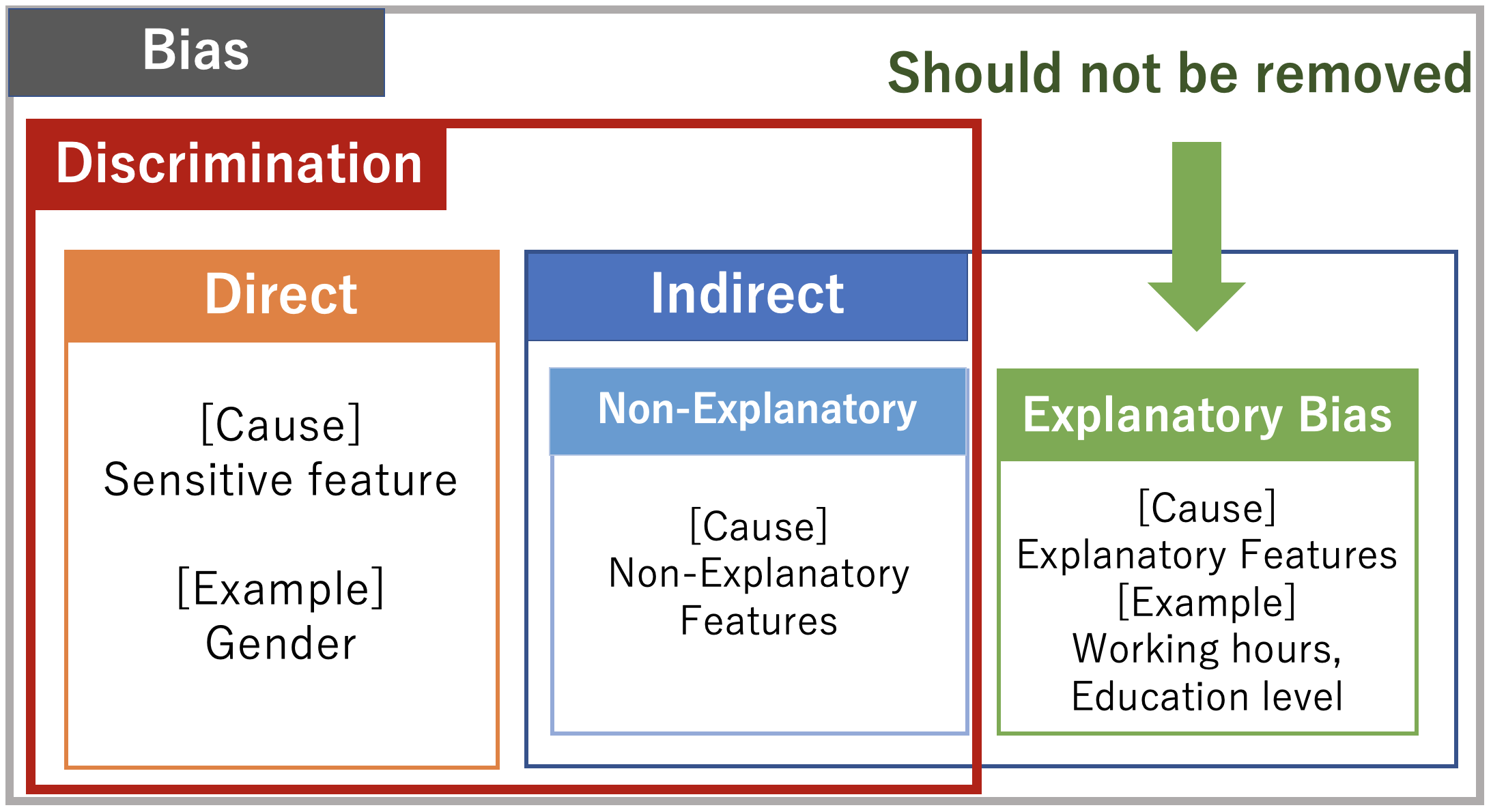}
\caption{Relationship between discrimination and explanatory bias. Sensitive, non-explanatory or explanatory features are selected by the user. The example features are from \cite{Calders2013}, where the task is to predict his/her salary.}
\label{fig:detail_disc}
\end{figure}

% explanatory bias を扱った手法の紹介
Researchers have introduced several statistical criteria \cite{EO1,Calders2010,EO2,individual} in an attempt to formalize notions of fairness. Mean difference (MD) \cite{Calders2010} is one of the most popular measures of fairness to quantify discrimination. This measure is the difference of outcomes between groups distinguished by the sensitive feature. MD can quantify direct and indirect discrimination, but explanatory bias is also included in indirect discrimination. Therefore, when we use MD=0 as a constraint, we may mistakenly remove explanatory bias. This leads to reverse discrimination and degrades performance. Several studies based on MD
 \cite{Calders2010,Kamiran2009,Harrison2015,Zafar2015FairnessCM,Louizos2015} can suffer from this problem.
  In this paper, discrimination is defined as MD except for explanatory bias:
\begin{align}\label{def:def_disc}
    \mbox{Discrim. = MD - explanatory bias}
\end{align}
and use Discrim. =0 as a constraint in our model.
Several studies have dealt with explanatory bias \cite{kamiran2013,Impartial,Kamiran2013ExplainableAN} from a non-causal perspective. In addition to them,
there are mainly two research streams that have used estimators of causal effect \cite{causaleffect,Pearl:2009} instead of MD.
The recent work \cite{Kusner,Kilbertus2018,fair_inference_on_outcomes,Zhang2018,When_Worlds_Collide,path_specificCF,CausalModelingBased,CausalFramework} uses causal diagrams for this purpose,
but the resulting optimization problems tend to be difficult. Indeed, it seems difficult to find a global solution for such problems because of their non-convexity \cite{fair_inference_on_outcomes,EO_causal}.

% この研究での貢献
In this paper, we use propensity score analysis \cite{propensity} for simplifying the models used in other research.
The papers \cite{IPS_discovery,Calders2013} deal with explanatory bias based on propensity score analysis.
In \cite{IPS_discovery} the authors proposed a causal discrimination discovery approach using estimators of causal effect based on the propensity score. Although their work bears similarity to ours in terms of using estimators of causal effect, there are differences in that the estimator in \cite{IPS_discovery} is calculated in its nearest neighbor such as in individual fairness \cite{individual} and in that our model is for preventing discrimination, not for discovery.
In \cite{Calders2013} the authors proposed Multi MD, which removes discrimination and keeps explanatory bias on the basis of stratification \cite{propensity_strat}.
However, unless the number of strata is set appropriately, the stratification may create a stratum that has some explanatory bias, which results in failure of Multi MD.

We propose new models, called Fair prediction using Causal Effect Estimators, (FairCEEs) that use another estimator based on the propensity score,
the IPW \cite{IPW} or DR \cite{DR} estimator, for causal effect.
Using the estimators for a constraint instead of MD makes it possible to remove the discrimination and keep the explanatory bias.

Our contributions are as follows:
\begin{itemize}
\item
  FairCEEs based on the IPW and DR estimators of causal effect are formulated as simple convex quadratic optimizations; for classification, they are globally solved with the proximal gradient method; for regression, they are solved using the Lagrange multiplier method.
\item FairCEE-IPW and Multi MD theoretically outperform Single MD in regression tasks.
\item Numerical experiments on synthetic and real-world data show that FairCEEs outperform Multi MD in terms of fairness and prediction accuracy.
\end{itemize}
The organization of the paper is as follows: after preliminaries are introduced in Section~\ref{chap:preliminaries}, our model is described in Section \ref{chap:proposed_model}. The algorithm and theoretical analysis are in Section \ref{chap:algo_analysis}. Section \ref{chap:experiments} describes numerical experiments on synthetic and real-world datasets. A conclusion follows.

\section{Preliminaries}\label{chap:preliminaries}

We review MD \cite{Calders2010}, which is one of the most popular measures of fairness, and
show an existing model Multi MD \cite{Calders2013}, which removes discrimination %defined as
\eqref{def:def_disc}.
Next, we point out the disadvantages of Multi MD. Finally, we describe the potential outcome model \cite{rubinmodel} and its relationship with Multi MD.
    \subsection{Notation}\label{sec:notation}
    Suppose that $N$ and $d$ are the number of data and non-sensitive features, respectively and one sensitive feature
%    \COMM{AT}{multiple sensitive features are not considered??}
    exists in a given dataset.
%    \footnote{Although we consider the setting with one binary sensitive feture in this paper, all results can be generalized to multiple sensitive features or a sensitive feature with multiple levels (i.e. $S \in \{ 0, 1, 2,...N_s\}$).}
%    \COMM{HO}{sensitive featureを複数にする、複数クラスにするという二種類の拡張はそれぞれについて因果効果を考えることでできそうだと思う(脚注に追加しました)のですが、断言するのは少し怖いです。}.
    $\bm X \in \mathbb{R}^{N\times d}$ is a matrix consisting of non-sensitive feature values.
    $\bm x_i \in \mathbb R^{d}$ indicates the feature values of the $i$-th instance, i.e., the $i$-th row of $\bm X$, and $S$ denotes the sensitive feature.
    $S$ is a binary variable such as male/female, denoted by $S \in \{+1, 0\}$.
    $s_i$ is the observed value of sensitive feature $S$ of the $i$-th instance, and $\bm s$ is the vector defined by $\bm s = (s_1,s_2,\ldots,s_N)^\top$.
          For convenience,
       we define two index sets $\mathcal{I}^+$ and $\mathcal{I}^-$, defined by $\mathcal{I}^+ = \{ i \ | i \in [1, N] , s_i = 1\}$ and $\mathcal{I}^- = \{ i\ | i \in [1, N] , s_i = 0\}$.
      $Y$ is the target label; in a regression task, $Y \in \mathbb R$, and in a binary classification task, $Y \in \{+1,0\}$, unless otherwise specified. $y_i$ is the observed label of the $i$-th instance, and $\bm y$ is the vector defined by $\bm y = (y_1,y_2,\ldots,y_N)^\top$.
      $\hat y_i$ is the predicted label of the $i$-th instance.
      In this paper, we consider a linear model class and denote the trainable parameter by $\bm w \in \mathbb R^{d}$, i.e,  $\hat y_i = \bm x_i \bm w$ for regression or $\hat y_i = \sigma (\bm x_i \bm w)$ for classification where $\sigma (\cdot)$ is a sigmoid function.
      %       In a regression task, $Y \in \mathbb R$, and in a binary classification task, $Y \in \{+1,0\}$, unless otherwise specified.
%      For convenience,
%       we define two index sets $\mathcal{I}^+$ and $\mathcal{I}^-$, defined by $\mathcal{I}^+ = \{ i \ | i \in [1, N] , s_i = 1\}$ and $\mathcal{I}^- = \{ i\ | i \in [1, N] , s_i = 0\}$.

        In what follows, we will consider the supervised learning setting, where we have a dataset ($\bm X$, $\bm s$, $\bm y$) meaning (non-sensitive feature values, sensitive feature values, labels),
        and our goal is to train a model whose prediction does not include discrimination \eqref{def:def_disc},
        but has good accuracy.
    \subsection{Single MD and Multi MD} \label{subsec : single_multi}
    MD is a measure quantifying how discriminatory the dataset is for classification and regression tasks.

    \begin{defi}[Mean Difference \cite{Calders2010}]\label{def:md}

        \begin{align}\label{def:md_math}
            \mbox{MD}=\frac{1}{|\mathcal{I}^+|}\sum_{i \in \mathcal{I}^+} y_i - \frac{1}{|\mathcal{I}^-|}\sum_{i \in \mathcal{I}^-} y_i,
        \end{align}
    where $|\mathcal{I}|$ is the size of the set $\mathcal{I}$. Note that by replacing $y_i$ with $\hat y_i$ in the definition, MD represents how discriminatory the prediction is.
    \end{defi}

     Now let us describe Single MD and Multi MD \cite{Calders2013}, which are related to our model. \footnote{Single MD and Multi MD are denoted by SEM-S, SEM-MP in \cite{Calders2013}.} Single MD is formulated as
        \begin{align}\label{prob:single}
             & \min_{\bm w} && \|\bm {Xw}-\bm y\|^2  \notag \\
          & \mathrm{s.t.} &&  \bm d^\top \bm{Xw}=0,
        \end{align}
        where

        \begin{align} \label{def:md_constraint}
            \bm d= \frac{\bm s}{\bm1^\top \bm s}-\frac{\bm 1 - \bm s}{\bm 1^\top (\bm 1 - \bm s)}.
        \end{align}

        From \eqref{def:md_math} and \eqref{def:md_constraint} and reminding that $s_i=1$ for $i \in \mathcal{I}^+$ and $s_j=0$ for $j \in \mathcal{I}^-$,
         $\bm d^\top \bm{Xw}$ represents MD of outputs,
        and therefore, the constraint in \eqref{prob:single} means MD=0. As mentioned in Section \ref{chap:introduction}, Single MD does not consider explanatory bias; it is defined by \cite{Calders2013} as the differences in outcomes of groups distinguished by a sensitive feature value that is explainable by explanatory features and can be justified.
        The explanatory features are denoted by $\bm X_e \in \mathbb R^{d_e}$,
        where $d_e$ is the number of explanatory features, while the non-explanatory feature values are denoted by $\bm X_n \in \mathbb R^{d-d_e}$.
        In our settings, the explanatory features are provided externally.

        \begin{rem}
          It seems possible to remove non-explanatory discrimination without removing explanatory bias by using MD in an inequality constraint, i.e., $\bm d^\top \bm{Xw} \le \delta$ with a hyperparameter $\delta$,
          following \cite{komiyama,Zafar2015FairnessCM,Agarwal}
           or in a regularizer, i.e., $\lambda(\bm d^\top \bm{Xw})^2$ with a hyperparameter $\lambda$, as in \cite{Kamishimab,Berk2017,Zemel}.
          However, these approaches require the correct explanatory bias in the hyperparameter $\delta$, $\lambda$, but the true value of the explanatory bias is unknown.
          It is difficult to determine appropriate hyperparameter values because the criteria used in validation % to find the hyperparameters
          for hyperparameters %suitable for the explanatory bias
          is not clear.
          On the other hand, our approaches described later can remove discrimination while keeping explanatory bias without using hyperparameters.
        \end{rem}

        Calders et al. \cite{Calders2013} proposed Multi MD which utilizes stratification \cite{propensity_strat}
        according to a propensity score \cite{propensity} for a constraint.

        \begin{defi}[Propensity Score \cite{propensity}]
        Let $S$ and $\bm X_e$ be the sensitive feature and the explanatory features, respectively.
        The propensity score is defined as
            \[
                Z=P(S=1|\bm X_e).
            \]
        \end{defi}

        In general, because the true propensity score is unknown, we have to estimate it from the dataset. Usually this is done using a logistic regression model.
        Using this propensity score and stratification technique \cite{propensity_strat}, Multi MD can be formulated as follows \cite{Calders2013}. First, we estimate propensity scores.
        Then, we split the dataset into $K$ strata via the propensity score quantiles\footnote{Although $K$ is fixed to five in \cite{Calders2013}, we formulate the generalized form with various values of $K$ in this paper.}.
        Next, we define Multi MD constraints as MD=0 in each stratum.
        Formally $\bm X_{(k)}$, $\bm y_{(k)}$, $\bm s_{(k)}$ and $\bm d_{(k)}$ are feature values, labels, sensitive feature values and constraint vector
        respectively, in the $k$-th stratum.
        \begin{align}\label{prob:multi}
            & \min_{\bm w}          &&  \sum_{k=1}^K \|\bm X_{(k)} \bm w-\bm y_{(k)}\|^2  \notag \\
            & \mathrm{s.t.} &&  \bm d_{(k)}^\top \bm X_{(k)} \bm w=0 \ (k=1,2,...,K),
        \end{align}
        where
        \[
            \bm d_{(k)}=\frac{\bm s_{(k)}}{\bm1^\top \bm s_{(k)}}-\frac{\bm 1 - \bm s_{(k)}}{\bm 1^\top (\bm 1 - \bm s_{(k)})}.
        \]

        In the same way as Single MD \eqref{prob:single}, $\bm d_{(k)}^\top \bm X_{(k)} \bm w$ represents the value of MD of outputs in the $k$-th stratum.

    \subsection{Disadvantages of Multi MD}\label{subsec : disadvantages}
        Multi MD removes Discrim. \eqref{def:def_disc} by using stratification with respect to the propensity score. Here, we need an appropriately large number of strata $K$, because if $K$ is too small, each stratum will have a range of propensity scores and some explanatory bias may remain in each stratum. Unfortunately, there are three cases where we can not increase the number of strata $K$ appropriately.
        In this paper, we refer these cases as Imbalance CASE, Degenerate CASE and Inferred CASE.

        Imbalance CASE : the case where datasets are imbalanced with respect to the sensitive feature value of 0 or 1 (e.g. $|\mathcal I^+|:|\mathcal I^-|=9:1$).
         Here, splitting the dataset into many strata may create a stratum that has only a one-sided sensitive feature value.
         In that case, we can not calculate the MD, which leads to a failure of Multi MD. Since this case is likely to appear in fairness settings, it is a serious weakness of Multi MD.

        Degenerate CASE : the case where the number of features $d$ is small.
        Here, setting $K > d$ leads to degeneration and the optimal solution of Multi MD \eqref{prob:multi} becomes the zero vector,
        which satisfies MD=0. This means that Multi MD with large $K$ removes explanatory bias incorrectly and degrades performance in this case.

        Inferred CASE : the case where the estimated value of propensity score $z_i$ is close to $s_i$.
        As well as Imbalance CASE, splitting the dataset into many strata may create a stratum that has only a one-sided sensitive feature value in this case.

    \subsection{Potential Outcome Model (POM)} \label{subsec : potential_outcome_model}
        Now let us describe the potential outcome model (POM) \cite{rubinmodel}.
        Although the relationship between POM and Multi MD is not mentioned in \cite{Calders2013}, we should point out that the discrimination \eqref{def:def_disc} referred to in \cite{Calders2013} corresponds to the causal effect defined by POM. In Multi MD, the model estimates the causal effect by stratification \cite{propensity_strat} and uses the estimated value for its constraint.
        Our model uses POM in a similar way.

        POM considers potential outcome variables denoted by $Y_1$ and $Y_0$.
        Target label $Y$ has both the potential outcome $Y_1$ that would have resulted if had received the sensitive feature $S=1$,
        and the potential outcome $Y_0$ that would have resulted if had received the sensitive feature $S=0$. %$Y_1$ is $Y$ if $S$ is 1. $Y_0$ is defined similarly.
        We can observe either of them, but not both. In other words, we can observe $Y_1$ but not $Y_0$ when $S$ is equal to 1.
        Formally, we can write $Y=SY_1+(1-S)Y_0$.
        By using these variables, causal effect \cite{causaleffect} is defined as the difference between potential outcome variables.
        \begin{align}\label{def:causaleffect}
            \mbox{CE} = E[Y_1]-E[Y_0]
        \end{align}
        Since we can observe either of them, but not both, estimating the causal effect is a missing data problem.

        MD is generally not a consistent estimator of causal effect; i.e.,
       \[
            \mbox{MD} \underset{n\to \infty}{\to}E[Y_1|S=1]-E[Y_0|S=0] \neq E[Y_1]-E[Y_0].
        \]
        because of covariates that are correlated with both $S$ and $Y$.
%        \COMM{HO}{unbiased ではなくconsistentかどうかの方がいいかと思いました。(IPWやDRはCEの一致推定量なので)}
         %(i.e $E[\mbox{MD}]=E[Y|S=1]-E[Y|S=0] \neq \mbox{CE}$).
%        Formally, we have
%        \[
%            \mbox{MD} \underset{n\to \infty}{\to}E[Y_1|S=1]-E[Y_0|S=0] \neq E[Y_1]-E[Y_0].
%        \]
        If we regard the explanatory features as covariates,
        then the causal effect corresponds to Discrim. \eqref{def:def_disc}.
        This is the discrimination we want to remove. When we consider the discrimination as the causal effect \eqref{def:causaleffect}, the true explanatory bias corresponds to
        \[
        \underbrace{(E[Y_1|S=1]-E[Y_0|S=0])}_{\mbox{estimated by MD}} - \underbrace{(E[Y_1]-E[Y_0])}_{\mbox{Discrimination}}.
        \]
        There are several methods of estimating causal effect by using a propensity score, including stratification \cite{propensity_strat} used in \cite{Calders2013}.
        In this paper, instead of stratification, we use IPW and DR estimators, which are explained in Section \ref{chap:proposed_model}.

\section{Proposed Model}\label{chap:proposed_model}

Now we propose new models, called Fair prediction using Causal Effect Estimators (FairCEEs). The estimators, IPW and DR, used for FairCEEs are defined below.

We can avoid the three CASEs described in Section \ref{subsec : disadvantages} utilizing these estimators because they do not need dividing datasets into some strata.
Although the sensitive feature $S$ is binary for simplicity, these models can be generalized to multiple groups.

    \subsection{FairCEE-IPW}\label{sec: ipw_model}
    Here, we introduce the Inverse Probability Weighting (IPW) estimator \cite{IPW} and give a formulation using it for its constraint.
     Note that we use the loss function $\mathcal L(\bm w)$ as the objective function in the following formulations.
     We can apply our method to regression and binary classification tasks by setting $\mathcal L(\bm w)$ to the squared loss ($|\bm {Xw}-\bm y\|^2$) or logistic loss, respectively \footnote{Note that
       the input $\bm s$ of a sensitive feature is not used in the prediction models.}.

    \begin{defi}[IPW Estimator \cite{IPW}]\label{def:ipw}
        %Here, $z_i$ is the propensity score for the $i$-th instance, and $y_i$ is the label for the $i$-th instance. Accordingly, we define the IPW estimator of causal effect as
        Here, $z_i$ is the propensity score for the $i$-th instance. Accordingly, we define the IPW estimator of causal effect as

        \begin{align}\label{def:ipw_math}
            \mbox{IPW} = \frac{\sum_{i=1}^N \frac{s_i}{z_i}y_i}{\sum_{i=1}^N \frac{s_i}{z_i}} - \frac{\sum_{i=1}^N \frac{1-s_i}{1-z_i}y_i}{\sum_{i=1}^N \frac{1-s_i}{1-z_i}}.
        \end{align}

    \end{defi}
    The IPW estimator is a consistent estimator of causal effect \eqref{def:causaleffect} under the assumption that the modeling for the propensity score is correct
    \footnote{We can not confirm that the assumption holds because either of potential outcome variables is missing. However, in practice, we can check some metrics which measure the goodness of fit of the model such as AUC.}.
    Accordingly, FairCEE-IPW is formulated as
        \begin{align}\label{prob:ipw}
            & \min_{\bm w}          &&  \mathcal L(\bm w) \notag \\
            & \mathrm{s.t.} &&  \bm h^\top\bm{Xw}=0,
        \end{align}
        where

        \begin{align}\label{def:ipw_constraint}
            \bm h = \frac{\bm a}{\bm1^\top \bm a}-\frac{\bm b}{\bm 1^\top \bm b},
            \bm a=(\frac{s_1}{z_1},...,\frac{s_N}{z_N})^\top,
            \bm b=(\frac{1-s_1}{1-z_1},...,\frac{1-s_N}{1-z_N})^\top.
        \end{align}

        By the definition \eqref{def:ipw_math} and \eqref{def:ipw_constraint},
        $\bm h^\top\bm{Xw}$ is the value of the IPW estimator of outputs (i.e. the constraint in \eqref{prob:ipw} means IPW=0.).

    \subsection{FairCEE-DR}\label{sec: dr_model}
        The other estimator of causal effect is the Doubly Robust (DR) estimator \cite{DR}.

        \begin{defi}[DR Estimator \cite{DR}] \label{def:dr}
            Here, $z_i$ is the propensity score of the $i$-th instance. $g^+_{i}$ and $g^-_{i}$ are predictions of potential outcome variables ($Y_1, Y_0$) of the $i$-th instance. Accordingly, we define DR estimator of causal effect as
            \[
            \mbox{DR}= \frac{1}{N}\sum_{i=1}^N[(\frac{s_i}{z_i} -\frac{1-s_i}{1-z_i} )y_i+(1-\frac{s_i}{z_i})g^+_{i} -(1-\frac{1-s_i}{1-z_i})g^-_{i}].
            \]
        \end{defi}
        In addition to modeling the propensity score, the DR estimator uses the modeling for $Y_1$ and $Y_0$. Since either of the true values of $Y_1$ and $Y_0$ are unknown, we use the estimated values, i.e., $g^+_{i}$ and $g^-_{i}$, instead of $Y_1$ and $Y_0$. In this study, we train two models $G^+$ and $G^-$ for $Y_1$ and $Y_0$, respectively. $G^+$ is trained using explanatory features whose index belongs to $\mathcal I^+$ and $G^-$ is trained using explanatory features whose index is in $\mathcal I^-$. We use the output of $G^+$ and $G^-$ as $g^+_{i}$ and $g^-_{i}$
        instead of $Y_1$ and $Y_0$ (i.e. $g^+_{i}=G^+(\bm x_{i,e})$ and $g^-_{i}=G^-(\bm x_{i,e})$ where  $x_{i,e}$ is the explanatory feature values of the $i$-th instance.)

        We explained above that if the propensity score is incorrectly estimated, the IPW estimator will not be consistent.
        For this reason, the IPW estimator is not consistent in Inferred CASE. %(in detail, the IPW estimator is close to MD in Inferred CASE).
        However, it is known that even if the estimated propensity score is wrong, the DR estimator remains a consistent estimator of causal effect as long as $Y_1$ and $Y_0$ are estimated correctly (see, e.g., \cite{DR}).
        We formulate FairCEE-DR as follows. (Note that $\bm a$ and $\bm b$ are defined in \eqref{def:ipw_constraint}.)
        \begin{align}
            \label{prob:dr}
            & \min_{\bm w}          &&  \mathcal L(\bm w) \notag \\
            & \mathrm{s.t.} &&  (\bm a-\bm b)^\top\bm {Xw}+(\bm 1 -\bm a)^\top\bm g^+-(\bm 1 -\bm b)^\top\bm g^-=0
        \end{align}
        where
        \[
            \bm g^+=(g^+_{1}, g^+_{2},..., g^+_{N})^\top, ~~
            \bm g^-=(g^-_{1}, g^-_{2},...,g^-_{N})^\top.
        \]

        As well as FairCEE-IPW, $(\bm a-\bm b)^\top\bm {Xw}+(\bm 1 -\bm a)^\top\bm g^+-(\bm 1 -\bm b)^\top\bm g^-$ is the estimated value of DR estimator of the prediction (i.e. the constraint of \eqref{prob:dr} means DR=0.).
        Since the constraints of FairCEEs with a linear model are linear in terms of $\bm w$, we can solve them using algorithms described below\footnote{In general, the constraints of FairCEEs are linear with respect to outputs. Applying FairCEEs to a non-linear model may require another algorithm.}.

\section{Algorithm and Analysis}\label{chap:algo_analysis}

In this section, we describe the algorithms for FairCEEs in regression and classification tasks.
Then we show two theorems on the squared losses of FairCEE-IPW and Multi MD.

\subsection{Algorithm for Regression and Classification}
In a regression task, we use the squared loss function $\|\bm{Xw} -\bm y\|^2$ as $\mathcal L(\bm w)$.
We can solve FairCEEs with the squared loss by using the Lagrange multiplier method as is done with Multi MD in \cite{Calders2013}.
%Specifically, we obtain an optimal solution of FairCEEs with the squared loss $\mathcal L(\bm w)$ for regression by solving a system of linear equations.
    % \section{Algorithm for Binary Classification}

In a binary classification task, we use the logistic loss function $\sum_{i=1}^N \log (1+e^{-y_i \bm w^\top\bm x_i})$ as $\mathcal L(\bm w)$ in FairCEEs \eqref{prob:ipw} and \eqref{prob:dr}.
Then we solve them with the proximal gradient method (PGM) (see, e.g., \cite{Book:Beck:2017}).
FairCEEs, Single MD and Multi MD can be formulated in the following generalized form\footnote{Note that our algorithm can be applied to not only FairCEEs but also Single MD and Multi MD for binary classification.}.
Let $\bm P \in \mathbb{R}^{d\times m}$ and $\bm q \in \mathbb{R}^{m}$ be a constant matrix and vector, respectively,  for constraints.
$m$ is the number of constraints. In Multi MD, $m$ is equal to $K$. In the other methods, $m$ is equal to 1.
Let $y_i \in \{+1,-1\}$ be the binary label for the $i$-th instance.
\begin{align}\label{prob:generalized}
    & \min_{\bm w}          &&  \sum_{i=1}^N \log (1+e^{-y_i \bm w^\top\bm x_i}) \notag \\
    & \mathrm{s.t.} &&  \bm{Pw}=\bm q.
\end{align}

%    To apply the proximal gradient method, we rewrite it as follows.
% \begin{align}
%        & \min_{\bm w}          &\ \  \sum_{i=1}^N \log (1+e^{-y_i \bm w^\top\bm x_i}) + \iota_{\bm P,\bm q}(\bm w),
%    \end{align}
%    where
%    \[
%    \iota_{\bm P,\bm q}(\bm w) =
%    \begin{cases}
%        0  \ \mathrm{if} \  \bm{Pw}=\bm q,  \\
%        +\infty \ \mathrm{otherwise}.
%      \end{cases}
%    \]
    Next, we apply the proximal gradient method to \eqref{prob:generalized}.
    Algorithm~\ref{alg:PGM} is an overview of our algorithm\footnote{%Setting $\eta$ to $1/L$, i.e., setting a small step size, may lead to a long computation.
    In our algorithm, we use a backtracking line search with a clipping process ($\eta \leftarrow \max{(\beta \eta,\frac{1}{L})}$),
     which prevents the step size from being less than $1/L$.},
    where $\hat {\mathcal L}_{\frac{1}{L}}(x;y):=\mathcal L(y)+\nabla \mathcal L(y)^\top(x-y)+\frac{L}{2}\|x-y\|^2$.

         The following theorem implies that the proximal gradient method, Algorithm~\ref{alg:PGM}, has a convergence rate of $O(1/t)$.
         \begin{algorithm}
         \caption{PGM for FairCEEs} %linear constrained models}
         \label{alg:PGM}
         \begin{algorithmic}
         \REQUIRE $\bm P, \bm q, \bm X, \bm y, \eta_0, 0<\beta<1$
         \STATE $\eta = \eta_0$
         \STATE Initialize $\bm w_0$
         \FOR{$t=1,2,\ldots$}
             \WHILE{True}
             \STATE $\bm w'_{temp} \leftarrow \bm w_{t-1} -\eta \nabla \mathcal L(\bm w_{t-1})$
             \STATE $\bm w_{temp} \leftarrow \bm w'_{temp} - \bm P^\top(\bm{PP}^\top)^{-1}(\bm {P\bm w}'_{temp}-\bm q)$
                 \IF{$\mathcal L(\bm w_{temp}) \le \hat {\mathcal L}_{\eta}(\bm w_{temp};\bm w_{t-1})$}
                    \STATE Keep $\bm w_{temp}$ as $\bm w_{t}$
                    \STATE Break
                 \ELSE
                    \STATE $\eta \leftarrow \max{(\beta \eta,\frac{1}{L})}$
                 \ENDIF
             \ENDWHILE
         \ENDFOR

         \end{algorithmic}
         \end{algorithm}

         \begin{theo}\label{theo:convergence}
           PGM for FairCEEs \eqref{prob:ipw} and \eqref{prob:dr} satisfies
           \[
           f(\bm{w}_t)-f^* \leq \frac{L}{2t}\|\bm{w}_t-\bm{w}^*\|_2^2,
           \]
           where $\bm{w}^*$ is an optimal solution of FairCEEs and the objective value $f^*$.
           \end{theo}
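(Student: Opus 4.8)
The plan is to recognize Algorithm~\ref{alg:PGM} as the proximal gradient method applied to the composite problem $\min_{\bm w} F(\bm w)$ with $F(\bm w)=\mathcal L(\bm w)+\iota_C(\bm w)$, where $f:=\mathcal L$ is the logistic loss and $\iota_C$ is the indicator function of the affine feasible set $C=\{\bm w:\bm{Pw}=\bm q\}$, and then to run the standard $O(1/t)$ analysis (as in \cite{Book:Beck:2017}). First I would verify the two structural facts the analysis needs. The logistic loss is convex with $L$-Lipschitz gradient, $L=\tfrac14\lambda_{\max}(\bm X^\top\bm X)$, so the descent inequality $\mathcal L(\bm u)\le \mathcal L(\bm v)+\nabla\mathcal L(\bm v)^\top(\bm u-\bm v)+\tfrac{L}{2}\|\bm u-\bm v\|^2$ holds. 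Assuming the rows of $\bm P$ are linearly independent so that $\bm{PP}^\top$ is invertible, the map $\bm v\mapsto \bm v-\bm P^\top(\bm{PP}^\top)^{-1}(\bm{Pv}-\bm q)$ in the inner loop is exactly the orthogonal projection $\Pi_C(\bm v)$; since the proximal operator of $\iota_C$ is $\Pi_C$, each iteration is one proximal gradient step $\bm w_t=\Pi_C(\bm w_{t-1}-\eta\nabla\mathcal L(\bm w_{t-1}))$.

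Next I would justify the line search. Because $\mathcal L$ is $L$-smooth, $\eta=1/L$ always passes the acceptance test $\mathcal L(\bm w_{temp})\le \hat{\mathcal L}_\eta(\bm w_{temp};\bm w_{t-1})$, and the clipping update $\eta\leftarrow\max(\beta\eta,\tfrac1L)$ keeps $\eta\ge 1/L$; hence the inner loop terminates, and the accepted step size satisfies $1/\eta\le L$ together with the sufficient-decrease bound $\mathcal L(\bm w_t)\le \mathcal L(\bm w_{t-1})+\nabla\mathcal L(\bm w_{t-1})^\top(\bm w_t-\bm w_{t-1})+\tfrac{1}{2\eta}\|\bm w_t-\bm w_{t-1}\|^2$.

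The core step, and the part I expect to be the main obstacle, is the fundamental prox-gradient inequality: for every feasible $\bm u\in C$,
\[
F(\bm w_t)-F(\bm u)\le \tfrac{1}{2\eta}\|\bm u-\bm w_{t-1}\|^2-\tfrac{1}{2\eta}\|\bm u-\bm w_t\|^2 .
\]
I would derive this by combining three ingredients: the sufficient-decrease bound above; convexity of $\mathcal L$ in the form $\mathcal L(\bm u)\ge \mathcal L(\bm w_{t-1})+\nabla\mathcal L(\bm w_{t-1})^\top(\bm u-\bm w_{t-1})$; and the variational inequality of the projection, $\langle \bm w_{t-1}-\eta\nabla\mathcal L(\bm w_{t-1})-\bm w_t,\ \bm u-\bm w_t\rangle\le 0$ for all $\bm u\in C$. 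Adding these and completing the square yields the displayed bound. The delicate point is coordinating the smoothness constant $1/\eta$ with the projection geometry so the cross terms assemble into the telescoping difference rather than leaving residual gradient terms.

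Finally I would set $\bm u=\bm w^*$ and telescope. Summing over $k=1,\dots,t$ collapses the right-hand side to $\tfrac{1}{2\eta}\big(\|\bm w_0-\bm w^*\|^2-\|\bm w_t-\bm w^*\|^2\big)\le \tfrac{1}{2\eta}\|\bm w_0-\bm w^*\|^2$, while the choice $\bm u=\bm w_{k-1}$ gives $F(\bm w_k)\le F(\bm w_{k-1})$, so $F(\bm w_t)$ is the smallest of $F(\bm w_1),\dots,F(\bm w_t)$ and $t\,(F(\bm w_t)-F^*)\le\sum_{k=1}^t(F(\bm w_k)-F^*)$. Using $1/\eta\le L$ then yields $F(\bm w_t)-F^*\le \tfrac{L}{2t}\|\bm w_0-\bm w^*\|^2$; since all iterates and $\bm w^*$ lie in $C$, we have $F=f$ there, which establishes the rate. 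I note that the natural provable right-hand side uses $\bm w_0$, and since the same telescoping gives Fej\'er monotonicity $\|\bm w_t-\bm w^*\|\le\|\bm w_0-\bm w^*\|$, the $\bm w_t$ appearing in the statement should be read as $\bm w_0$ (the initial iterate) for the bound to be the standard, provable one.
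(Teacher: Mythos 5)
Your proposal is correct and follows the same underlying route as the paper, but it is considerably more complete than what the paper actually writes down. The paper's proof of Theorem~\ref{theo:convergence} consists of two things only: a pointer to standard references (Nesterov / Beck) for the $O(1/t)$ rate of the proximal gradient method, and a verification that the logistic loss is $L$-smooth with $L=\sum_{i=1}^N\|\bm x_i\|^2$. Everything else in your argument --- identifying the affine-correction step $\bm v\mapsto \bm v-\bm P^\top(\bm{PP}^\top)^{-1}(\bm{Pv}-\bm q)$ as $\mathrm{prox}_{\iota_C}=\Pi_C$ (which needs $\bm{PP}^\top$ invertible, an assumption the paper leaves implicit), the termination of the clipped line search, the fundamental prox-gradient inequality, and the telescoping --- is exactly the material the paper delegates to the citation, and you supply it correctly. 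Two of your observations genuinely improve on the paper: your Lipschitz constant $\tfrac14\lambda_{\max}(\bm X^\top\bm X)$ is tighter than the paper's $\sum_i\|\bm x_i\|^2=\mathrm{tr}(\bm X^\top\bm X)$ (both are valid, since the Hessian is $\sum_i g''(\cdot)\,\bm x_i\bm x_i^\top\preceq\tfrac14\bm X^\top\bm X$), and you are right that the provable bound has $\|\bm w_0-\bm w^*\|_2^2$ on the right-hand side, so the $\bm w_t$ in the theorem statement should be read as the initial iterate $\bm w_0$; the paper neither notices nor addresses this. The one point to tidy up in a full write-up is the telescoping with a step size that may shrink across outer iterations, but since the algorithm never increases $\eta$ and clips it at $1/L$, this is the standard backtracking technicality and does not affect the conclusion.
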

           \begin{proof}
                 With the help of references such as
               \cite {Book:Nesterov:2004},
                 the statement can be proved if the gradient vector of $\mathcal L$  is Lipschitz continuous
                 with a constant $L$
               (called {\it $L$-smooth});
               i.e., there exists a constant $L>0$ that satisfies
               \[
               \|\nabla \mathcal L(\bm{w})-\nabla \mathcal L(\bm{v})\|_2\le L\|\bm{w} -\bm{v}\|_2\quad (\bm{w},\bm{v}\in \mathbb R^{d}).
               \]
%               Actually, the logistic loss function has the $L$-smoothness.

%% L-smoothの証明
    %       Indeed, the logistic loss function is $L$-smooth with the constant $L$  equal to $\sum_{i=1}^N\|\bm x_i\|^2$.
        Next, we prove that the logistic loss function is $L$-smooth function and $L=\sum_{i=1}^N\|\bm{x}_i\|^2$.

               First, we show that $g(t)=\log(1+e^t)$ is an $L$-smooth function.
               Since $g^{\prime}(t)=e^t/(1+e^t)=1/(1+e^{-t})$, and $0 \le g^{\prime\prime}(t) \le 1$, we can say the inequality
               $\|g^{\prime}(a)-g^{\prime}(b)\|\le \|a-b\|$, which means that $g(t)$ is an $L$-smooth function with $L$ equal to 1.

               %In fact, $g^{\prime}(t)$ is a sigmoid function. Therefore, this inequality is obvious.
               Next, we consider the following logistic loss function.
               \[
                   F(\bm w;\bm X, \bm y)=\sum_{i=1}^N \log(1+e^{-y_i\bm w^\top\bm x_i})=\sum_{i=1}^N f(\bm w;\bm x_i, y_i).
               \]
              We define $h(\bm w;\bm x,y)=-y\bm w^\top\bm x$.
              %Then we have $f(\bm w;\bm x,y)=g(h(\bm w;\bm x,y))$.
              The  derivative is $\nabla f(\bm w;\bm x,y) = -yg'(h(\bm w;\bm x))\bm x$  by the chain rule.

%               \begin{eqnarray*}
%                       \nabla f(\bm w;\bm x,y) &=& g'(h(\bm w;\bm x,y))\nabla h(\bm w;\bm x,y)\\
%                                    &=& -yg'(h(\bm w;\bm x))\bm x
%               \end{eqnarray*}
           Accordingly, we can obtain the following inequalities:

               \begin{eqnarray*}
                   && \|\nabla F(\bm\alpha;\bm X, \bm y)-\nabla F(\bm\beta;\bm X, \bm y)\|\\
                %    &=& \|\sum_{i=1}^N(\nabla f(\bm\alpha;\bm x_i,y_i)-\nabla f(\bm\beta;\bm x_i,y_i))\|\\
                                                       &\le& \sum_{i=1}^N\|\nabla f(\bm\alpha;\bm x_i,y_i)-\nabla f(\bm\beta;\bm x_i,y_i)\|\\
                                                       &=& \sum_{i=1}^N\|y_i g'(h(\bm \beta;\bm x_i,y_i))\bm x_i-y_i g'(h(\bm \alpha;\bm x_i,y_i))\bm x_i\|\\
                                                       &=& \sum_{i=1}^N |g'(h(\bm \beta;\bm x_i,y_i))\bm x_i-g'(h(\bm \alpha;\bm x_i,y_i))| \|\bm x_i\| (\because |y_i| =1) \\
                                                       &\le& \sum_{i=1}^N |h(\bm \beta;\bm x_i,y_i))- h(\bm \alpha;\bm x_i,y_i))| \|\bm x_i\|\\ %(\because g \text{ is } L\text{-smooth with } L=1.) \\
                    %                                   &=& \sum_{i=1}^N |y_i\bm \alpha^\top \bm x_i - y_i\bm \beta^\top \bm x_i| \|\bm x_i\|\\
                    %                                   &=& \sum_{i=1}^N |(\bm \alpha -  \bm \beta)^\top \bm x_i| \|\bm x_i\| (\because |y_i| =1)\\
                                                       &\le&  (\sum_{i=1}^N  \|\bm x_i\|^2)  \|\bm \alpha -  \bm \beta\|
              \end{eqnarray*}

               This means that the logistic loss $F(\bm w;\bm X, \bm y)$ is an $L$-smooth function and $L=\sum_{i=1}^N  \|\bm x_i\|^2$.

           \end{proof}

%         \begin{proof}
%           With the help of references such as
%\cite {Book:Nesterov:2004},
%           the statement can be proved if the gradient vector of $\mathcal L$  is Lipschitz continuous
%           with a constant $L$
%   (called {\it $L$-smooth});
%i.e., there exists a constant $L>0$ that satisfies
%\[
%\|\nabla \mathcal L(\bm{w})-\nabla \mathcal L(\bm{v})\|_2\le L\|\bm{w} -\bm{v}\|_2\quad (\bm{w},\bm{v}\in \mathbb R^{d}).
%\]
%  Indeed,  $\mathcal L$ is $L$-smooth with the constant $L$  equal to $\sum_{i=1}^N\|\bm x_i\|^2$.
%    The details are in Appendix \ref{chap: l-smooth}.
%\end{proof}
%      This theorem implies that the proximal gradient descent has a convergence rate of $O(1/t)$.
     \subsection{Theoretical Analysis}
Now we show some theoretical guarantees for the proposed model, FairCEE-IPW \eqref{prob:ipw}.
The following theorem says %that the optimal value of the squared loss of FairCEE-IPW \eqref{prob:ipw} is less than that of Single MD \eqref{prob:single} under some reasonable assumption.
%This implies
that our model FairCEE-IPW can achieve a smaller training error than  Single MD \eqref{prob:single} under a reasonable assumption.
The assumption in the theorem is that
 the value estimated by the IPW estimator on a dataset (i.e. $\bm h^\top \bm y$)
     is less than the value estimated by MD ($=\bm d^\top \bm y$), and the assumption is likely to hold as discussed in the end of this section.
%     the optimal value of the squared loss of FairCEE-IPW \eqref{prob:ipw} is less than that of Single MD \eqref{prob:single}.

%     Next, we show the implications of Theorem \ref{theo:ipw_loss} and \ref{theo:multi_loss}. %, whose proofs are given in Appendix \ref{chap:proof_ipw_loss} and \ref{chap:proof_multi_loss}.
     \begin{theo}[FairCEE-IPW Loss]\label{theo:ipw_loss}
         Let $\bm w_{\mathrm{single}}^*$, $\bm w_{\mathrm{ipw}}^*$  be optimal solutions of Single MD \eqref{prob:single} and FairCEE-IPW \eqref{prob:ipw}.
         Suppose that
         \[
            0 \le \bm h^\top \bm y \le \bm d^\top \bm y.
         \]
         Then we can obtain the following inequality:
         \[
             \|\bm X \bm w_{\mathrm{ipw}}^*-\bm y\|^2 \le \|\bm X \bm w_{\mathrm{single}}^*-\bm y\|^2.
         \]
     \end{theo}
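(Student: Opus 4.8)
The plan is to treat both Single MD \eqref{prob:single} and FairCEE-IPW \eqref{prob:ipw} as instances of a single common template, namely the least-squares problem $\min_{\bm w}\|\bm{Xw}-\bm y\|^2$ subject to one linear equality constraint $\bm c^\top\bm{Xw}=0$, with $\bm c=\bm d$ for Single MD and $\bm c=\bm h$ for FairCEE-IPW, and to derive a closed form for the optimal objective in each case so that the two can be compared directly. First I would write the Lagrangian stationarity condition $\bm X^\top(\bm{Xw}-\bm y)+\lambda\,\bm X^\top\bm c=\bm 0$; assuming $\bm X^\top\bm X$ is invertible this gives $\bm w=\bm w_{\mathrm{ols}}-\lambda(\bm X^\top\bm X)^{-1}\bm X^\top\bm c$, where $\bm w_{\mathrm{ols}}=(\bm X^\top\bm X)^{-1}\bm X^\top\bm y$ is the unconstrained least-squares solution, and the multiplier $\lambda$ is then pinned down by substituting back into the constraint $\bm c^\top\bm{Xw}=0$.

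Next I would introduce the orthogonal projector $\bm P=\bm X(\bm X^\top\bm X)^{-1}\bm X^\top$ onto the column space of $\bm X$ and decompose the constrained residual as $\bm{Xw}-\bm y=(\bm P\bm y-\bm y)-\lambda\,\bm P\bm c$. Since the unconstrained residual $\bm P\bm y-\bm y$ lies in the orthogonal complement of $\mathrm{col}(\bm X)$ while $\bm P\bm c$ lies inside it, the two pieces are orthogonal, and the Pythagorean identity together with $\lambda=(\bm c^\top\bm P\bm y)/(\bm c^\top\bm P\bm c)$ yields the clean formula
\[
\|\bm{Xw}^*-\bm y\|^2=\|\bm P\bm y-\bm y\|^2+\frac{(\bm c^\top\bm P\bm y)^2}{\bm c^\top\bm P\bm c}.
\]
Because the first term $\|\bm P\bm y-\bm y\|^2$ is the OLS residual, common to both problems, the theorem collapses to the single scalar inequality $\dfrac{(\bm h^\top\bm P\bm y)^2}{\bm h^\top\bm P\bm h}\le\dfrac{(\bm d^\top\bm P\bm y)^2}{\bm d^\top\bm P\bm d}$, i.e.\ to showing that the IPW constraint incurs the smaller least-squares penalty.

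The hard part, I expect, is bridging this reduced inequality with the stated hypothesis $0\le\bm h^\top\bm y\le\bm d^\top\bm y$: the hypothesis controls the \emph{raw} constraint values $\bm c^\top\bm y$, whereas the penalty involves the \emph{projected} numerators $\bm c^\top\bm P\bm y$ and the quadratic-form denominators $\bm c^\top\bm P\bm c$, which need not coincide for $\bm c=\bm h$ and $\bm c=\bm d$. The hypothesis does immediately give $(\bm h^\top\bm y)^2\le(\bm d^\top\bm y)^2$, and this is exactly where the nonnegativity $0\le\bm h^\top\bm y$ is used, to promote the one-sided bound to a squared bound. To finish I would need to argue that the numerators may be replaced by the raw values $\bm h^\top\bm y,\bm d^\top\bm y$ and that the denominators $\bm c^\top\bm P\bm c$ either cancel or can be bounded the right way — plausibly by exploiting that $\bm d$ and $\bm h$ are both zero-sum contrasts orthogonal to $\bm 1$ and by invoking whatever normalization of $\bm X$ the paper adopts. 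I anticipate this denominator/projection reconciliation to be the genuine crux; once it is handled, the squared hypothesis closes the argument in one line.
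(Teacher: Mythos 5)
Your reduction of both problems to a single template with one linear constraint $\bm c^\top\bm{Xw}=0$ and a closed-form optimal loss is exactly the paper's strategy, but your proposal stops short of an actual proof: you explicitly leave the ``denominator/projection reconciliation'' unresolved, and that is precisely where all of the quantitative content lives. The paper compares $(\bm h^\top\bm y)^2/\|\bm h\|^2$ with $(\bm d^\top\bm y)^2/\|\bm d\|^2$ using the \emph{raw} vectors, and the missing ingredient you never name is the denominator inequality $\|\bm h\|^2\ge\|\bm d\|^2$. The paper gets it by first computing $\|\bm d\|^2=\frac{1}{|\mathcal I^+|}+\frac{1}{|\mathcal I^-|}$ (a separate lemma, using $\bm s^\top(\bm 1-\bm s)=0$ and $\bm 1^\top\bm s=\|\bm s\|^2$), then writing $\|\bm h\|^2=\frac{\sum_{i\in\mathcal I^+}(1/z_i)^2}{(\sum_{i\in\mathcal I^+}1/z_i)^2}+\frac{\sum_{i\in\mathcal I^-}(1/(1-z_i))^2}{(\sum_{i\in\mathcal I^-}1/(1-z_i))^2}$ and applying Cauchy--Schwarz to each term to bound it below by $1/|\mathcal I^+|+1/|\mathcal I^-|$. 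Combined with $(\bm h^\top\bm y)^2\le(\bm d^\top\bm y)^2$ from the hypothesis (where the nonnegativity is used, as you note), the ratio comparison is immediate. Without this step your argument does not close, so as written the proposal has a genuine gap.

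A secondary remark: your projected formula $\|\bm P\bm y-\bm y\|^2+(\bm c^\top\bm P\bm y)^2/(\bm c^\top\bm P\bm c)$ is in fact more careful than the paper's own lemma, which asserts the optimal loss equals $(\bm c^\top\bm y)^2/\|\bm c\|^2$ outright --- i.e.\ it tacitly assumes the Cauchy--Schwarz lower bound is attained, meaning the optimal residual can be made parallel to $\bm c$ (true when $\mathrm{col}(\bm X)=\mathbb R^N$, not in general for $N>d$). Your extra care is to your credit, but it creates the raw-versus-projected mismatch you flag, and the hypothesis $0\le\bm h^\top\bm y\le\bm d^\top\bm y$ genuinely cannot bridge that mismatch on its own; the paper avoids the issue only by working with unprojected quantities throughout. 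Even granting the paper's idealization, you would still need the $\|\bm h\|^2\ge\|\bm d\|^2$ step to finish.
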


     To prepare for this proof, we first must prove the following Lemma \ref{lemma:optimal_loss} and \ref{lemma:md_norm}.

     \begin{lemm}[Loss of optimal solution]\label{lemma:optimal_loss}
         Consider an optimization problem formulated as follows.
         \begin{align*}
             & \min_{\bm w} \ \  \|\bm {Xw}-\bm y\|^2 \\
             & \mathrm{s.t.} \ \ \ \ \bm c^\top \bm{Xw}=0
         \end{align*}

         Let $w^{*}$ be the optimal solution of the above optimization problem.
         Then, the value of loss function is obtained as follows when $\bm w$ is optimal (i.e $\bm w=\bm w^*$)
         \[
             \|\bm X \bm w^*-\bm y\|^2=\frac{(\bm c^\top \bm y)^2}{\|\bm c\|^2}.
         \]
     \end{lemm}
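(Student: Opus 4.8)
The plan is to solve this single-constraint least-squares problem by Lagrange multipliers, consistent with the regression solver already announced for FairCEEs. Introducing a scalar multiplier $\lambda$ for $\bm c^\top \bm{Xw} = 0$, I would form $\|\bm{Xw} - \bm y\|^2 + 2\lambda\,\bm c^\top \bm{Xw}$, set its gradient in $\bm w$ to zero, and obtain the normal equations $\bm X^\top \bm{Xw} = \bm X^\top(\bm y - \lambda\bm c)$. Writing $\Pi = \bm X(\bm X^\top\bm X)^{-1}\bm X^\top$ for the orthogonal projector onto the column space of $\bm X$, the stationary fitted vector is $\bm{Xw} = \Pi(\bm y - \lambda\bm c)$.

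I would then fix $\lambda$ from feasibility. Imposing $\bm c^\top \bm{Xw} = 0$ gives $\lambda = (\bm c^\top\Pi\bm y)/\|\Pi\bm c\|^2$, and substituting back splits the residual as $\bm{Xw}^* - \bm y = -(I - \Pi)\bm y - \lambda\,\Pi\bm c$. Because $(I-\Pi)\bm y$ lies in the orthogonal complement of the column space while $\Pi\bm c$ lies inside it, these two pieces are orthogonal, so by the Pythagorean identity
\[
\|\bm{Xw}^* - \bm y\|^2 = \|(I-\Pi)\bm y\|^2 + \frac{(\bm c^\top\Pi\bm y)^2}{\|\Pi\bm c\|^2}.
\]
This is the exact optimal loss before any simplification.

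The remaining and genuinely delicate step is to collapse this to the claimed $\bm X$-free expression $(\bm c^\top\bm y)^2/\|\bm c\|^2$. The two agree precisely when the problem reduces to projecting $\bm y$ onto the hyperplane $\{\bm u : \bm c^\top\bm u = 0\}$ of $\mathbb R^N$, i.e.\ when $(I-\Pi)\bm y = 0$ and $\Pi\bm c = \bm c$; equivalently, either $\bm X$ has full row rank so that $\Pi = I$, or the unconstrained fit interpolates ($\bm y$ in the column space of $\bm X$) together with $\bm c$ in that same space. I would make this hypothesis explicit. Under it the feasible set is the hyperplane normal to $\bm c$, its nearest point to $\bm y$ is the foot of the perpendicular $\bm u^* = \bm y - (\bm c^\top\bm y/\|\bm c\|^2)\,\bm c$, and evaluating $\|\bm u^* - \bm y\|^2 = (\bm c^\top\bm y)^2/\|\bm c\|^2$ in one line finishes the proof. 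The main obstacle is thus not the optimization, which is elementary, but pinning down and justifying the hypothesis on $\bm X$ under which the projector corrections vanish and the stated formula is an equality rather than merely an upper bound.
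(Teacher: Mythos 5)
Your proof is correct and takes a genuinely different --- and in fact more careful --- route than the paper's. The paper argues directly from feasibility: since $\bm c^\top(\bm X\bm w^*-\bm y)=-\bm c^\top\bm y$, it writes $\|\bm X\bm w^*-\bm y\|^2=(\bm c^\top\bm y)^2/(\|\bm c\|^2\cos^2\theta)$ with $\theta$ the angle between $\bm c$ and the residual, and then simply asserts that optimality forces $\cos^2\theta=1$. That last step is exactly where your ``delicate step'' lives: $\cos^2\theta=1$ means the optimal residual is parallel to $\bm c$, which is attainable only if $\bm y+\alpha\bm c$ lies in the column space of $\bm X$ for some scalar $\alpha$ --- essentially the hypothesis you isolate ($(I-\Pi)\bm y=0$ and $\Pi\bm c=\bm c$, or $\Pi=I$). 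Your explicit Lagrange/projection computation buys the exact optimal value $\|(I-\Pi)\bm y\|^2+(\bm c^\top\Pi\bm y)^2/\|\Pi\bm c\|^2$ in all cases, and it shows that the stated formula is in general only a lower bound: for instance $\bm X=(1,0)^\top$, $\bm y=(0,1)^\top$, $\bm c=(1,1)^\top$ forces $\bm w^*=0$ and optimal loss $1$, whereas $(\bm c^\top\bm y)^2/\|\bm c\|^2=1/2$. The paper's Cauchy--Schwarz-style argument is shorter but silently assumes attainment of the bound. So you have not merely reproved the lemma by other means; you have identified a hypothesis that the paper omits and that is needed for the equality (and hence for Theorems \ref{theo:ipw_loss} and \ref{theo:multi_loss}, which invoke this lemma) to hold as stated --- in the paper's regime $N\gg d$ with noisy labels, $(I-\Pi)\bm y\neq 0$, so the omission is not innocuous. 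Making that hypothesis explicit, as you propose, is the right fix.
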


     \begin{proof}\label{proof:optimal_loss}
         The optimal solution $\bm w^{*}$ satisfies the constraint $\bm c^\top \bm X \bm w^{*}=0$.
         Then, we have
         \[
             \bm c^\top (\bm X \bm w^*-\bm y) = -\bm c^\top \bm y.
         \]
         We get
         \[
             \|\bm X \bm w^*-\bm y\|^2= \frac{(\bm c^\top \bm y)^2}{\|\bm c\|^2 \cos^2 \theta},
         \]
         where $\theta$ is the angle between $\bm c$ and $\bm X \bm w^*-\bm y$.

         Here, since $\bm w^*$ is the optimal solution, we can say
         \[
             \|\bm X \bm w^*-\bm y\|^2=\frac{(\bm c^\top \bm y)^2}{\|\bm c\|^2}.
         \]

     \end{proof}

     \begin{lemm}[The norm of MD constraint vector]\label{lemma:md_norm}
         The following equality holds.
         \[
             \|\bm d\|^2=\frac{1}{|\mathcal I^+|}+\frac{1}{|\mathcal I^-|},
         \]
         where $\bm d$ is defined in \eqref{prob:single}
         %, i.e.
         %\[
    %         \bm d  = \frac{\bm s}{\bm1^\top \bm s}-\frac{(\bm 1 - \bm s) }{\bm 1^\top (\bm 1 - \bm s)}.
    %     \]

         \end{lemm}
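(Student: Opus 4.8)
The plan is to exploit the fact that $\bm s$ is a binary $0/1$ vector, which makes every quantity appearing in $\bm d$ reduce to a cardinality. First I would rewrite the two denominators in the definition \eqref{def:md_constraint}: since $s_i = 1$ exactly for $i \in \mathcal I^+$ and $s_i = 0$ exactly for $i \in \mathcal I^-$, we have $\bm 1^\top \bm s = \sum_{i=1}^N s_i = |\mathcal I^+|$ and $\bm 1^\top (\bm 1 - \bm s) = \sum_{i=1}^N (1 - s_i) = |\mathcal I^-|$. Hence $\bm d = \frac{\bm s}{|\mathcal I^+|} - \frac{\bm 1 - \bm s}{|\mathcal I^-|}$, and the whole lemma becomes a short computation on this explicit vector.

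From here I see two equivalent routes and would take the one that is cleanest to typeset. The direct route is componentwise: for $i \in \mathcal I^+$ the $i$-th entry of $\bm d$ is $\frac{1}{|\mathcal I^+|}$, while for $i \in \mathcal I^-$ it is $-\frac{1}{|\mathcal I^-|}$, so
\[
\|\bm d\|^2 = \sum_{i \in \mathcal I^+} \frac{1}{|\mathcal I^+|^2} + \sum_{i \in \mathcal I^-} \frac{1}{|\mathcal I^-|^2} = |\mathcal I^+| \cdot \frac{1}{|\mathcal I^+|^2} + |\mathcal I^-| \cdot \frac{1}{|\mathcal I^-|^2} = \frac{1}{|\mathcal I^+|} + \frac{1}{|\mathcal I^-|}.
\]
The algebraic route instead expands the squared norm as $\|\bm d\|^2 = \frac{\|\bm s\|^2}{|\mathcal I^+|^2} - \frac{2\,\bm s^\top(\bm 1 - \bm s)}{|\mathcal I^+|\,|\mathcal I^-|} + \frac{\|\bm 1 - \bm s\|^2}{|\mathcal I^-|^2}$ and then uses $\|\bm s\|^2 = \sum_i s_i^2 = \sum_i s_i = |\mathcal I^+|$, $\|\bm 1 - \bm s\|^2 = |\mathcal I^-|$, and the cross term $\bm s^\top(\bm 1 - \bm s) = \sum_i s_i(1 - s_i) = 0$, giving the same result. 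Both rely on $s_i \in \{0,1\}$ so that $s_i^2 = s_i$.

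There is no genuine obstacle in this lemma; it is a routine calculation. The only point that deserves a moment of care is the vanishing of the cross term (equivalently, that $\bm s$ and $\bm 1 - \bm s$ have disjoint supports), since that is precisely what keeps the two groups from mixing and lets the squared norm split into the two reciprocal-cardinality contributions. I would therefore state that orthogonality explicitly so the final simplification is transparent.
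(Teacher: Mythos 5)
Your proof is correct and, in its second (algebraic) route, is essentially identical to the paper's: expand $\|\bm d\|^2$, kill the cross term via $\bm s^\top(\bm 1-\bm s)=0$, and use $\|\bm s\|^2=\bm 1^\top\bm s=|\mathcal I^+|$ and $\|\bm 1-\bm s\|^2=\bm 1^\top(\bm 1-\bm s)=|\mathcal I^-|$. The componentwise variant you offer first is an equally valid, equivalent computation, so there is nothing to correct.
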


     \begin{proof}\label{proof:md_norm}
         From the definition of $\bm s$, we can prove the lemma.
%         \begin{eqnarray}
%             \bm 1^\top \bm s &=& \|\bm s\|^2, \label{prop : s1} \\
%             \bm 1^\top (\bm 1 -\bm s) &=& \|\bm  1 - \bm s\|^2, \label{prop : s2}\\
%             \bm s^\top (\bm 1 -\bm s) &=& 0 \label{prop : s3}.
%         \end{eqnarray}
%         By using these equalities, we can prove the lemma.
         \begin{eqnarray*}
             \|\bm d\|^2 &=& (\frac{\bm s}{\bm1^\top \bm s}-\frac{(\bm 1 - \bm s) }{\bm 1^\top (\bm 1 - \bm s)})^\top
                 (\frac{\bm s}{\bm1^\top \bm s}-\frac{(\bm 1 - \bm s) }{\bm 1^\top (\bm 1 - \bm s)})\\
                 &=& \frac{\|\bm s\|^2}{(\bm1^\top \bm s)^2}+\frac{\|(\bm 1 - \bm s)\|^2}{(\bm1^\top (\bm 1 - \bm s))^2}(\because \bm s^\top (\bm 1 -\bm s) = 0)\\
                 &=& \frac{1}{\bm1^\top \bm s}+\frac{1}{\bm1^\top (\bm 1 - \bm s)}(\because  \bm 1^\top \bm s = \|\bm s\|^2, \bm 1^\top (\bm 1 -\bm s) = \|\bm  1 - \bm s\|^2)\\
                 &=& \frac{1}{|\mathcal I^+|}+\frac{1}{|\mathcal I^-|}
         \end{eqnarray*}
     \end{proof}

     Next, we prove Theorem \ref{theo:ipw_loss}.

     \begin{proof}\label{proof:ipw_loss}
         We can say
%         \begin{eqnarray*}
%             \|\bm h\|^2 &=& (\frac{\bm a}{\bm 1^\top \bm a}-\frac{\bm b}{\bm 1^\top \bm b})^{\top}(\frac{\bm a}{\bm 1^\top \bm a}-\frac{\bm b}{\bm 1^\top \bm b})\\
%                         &=& \frac{\|\bm a\|^2}{(\bm 1^\top \bm a)^2}+\frac{\|\bm b\|^2}{(\bm 1^\top \bm b)^2}\\
%                         &=& \frac{\sum_{i=1}^N (\frac{s_i}{z_i})^2}{(\sum_{i=1}^N \frac{s_i}{z_i})^2}+\frac{\sum_{i=1}^N (\frac{1-s_i}{1-z_i})^2}{(\sum_{i=1}^N \frac{1-s_i}{1-z_i})^2}\\
%                         &=& \frac{\sum_{i \in \mathcal{I}^+} (\frac{1}{z_i})^2}{(\sum_{i \in \mathcal{I}^+} \frac{1}{z_i})^2}+\frac{\sum_{i \in \mathcal{I}^-} (\frac{1}{1-z_i})^2}{(\sum_{i \in \mathcal{I}^-} \frac{1}{1-z_i})^2}\\
%                         &=& \frac{1}{\frac{(\sum_{i \in \mathcal{I}^+}\frac{1}{z_i})^2}{\sum_{i \in \mathcal{I}^+} (\frac{1}{z_i})^2}}
%                         +\frac{1}{\frac{(\sum_{i \in \mathcal{I}^-} \frac{1}{1-z_i})^2}{\sum_{i \in \mathcal{I}^-} (\frac{1}{1-z_i})^2}}
%         \end{eqnarray*}

         \begin{eqnarray*}
             \|\bm h\|^2 &=& \frac{1}{\frac{(\sum_{i \in \mathcal{I}^+}\frac{1}{z_i})^2}{\sum_{i \in \mathcal{I}^+} (\frac{1}{z_i})^2}}
                         +\frac{1}{\frac{(\sum_{i \in \mathcal{I}^-} \frac{1}{1-z_i})^2}{\sum_{i \in \mathcal{I}^-} (\frac{1}{1-z_i})^2}} \\
                         &\ge& \frac{1}{|\mathcal{I}^+|} + \frac{1}{|\mathcal{I}^-|} (\because \text{the Cauchy Schwartz inequality})\\
                         &=& \|\bm d\|^2 (\because \text{Lemma } \ref{lemma:md_norm})
         \end{eqnarray*}

%         We obtain the following inequality by using the Cauchy Schwartz inequality.
%
%         \[
%             \frac{(\sum_{i \in \mathcal I}  z_i)^2}{\sum_{i \in \mathcal I}  z_i^2} \le |\mathcal I|.
%         \]
%
%         By using this inequality, we can obtain
%         \begin{eqnarray*}
%             \|\bm h\|^2 &=& \frac{1}{\frac{(\sum_{i \in \mathcal{I}^+}\frac{1}{z_i})^2}{\sum_{i \in \mathcal{I}^+} (\frac{1}{z_i})^2}}
%             +\frac{1}{\frac{(\sum_{i \in \mathcal{I}^-} \frac{1}{1-z_i})^2}{\sum_{i \in \mathcal{I}^-} (\frac{1}{1-z_i})^2}}\\
%                         &\ge& \frac{1}{|\mathcal{I}^+|} + \frac{1}{|\mathcal{I}^-|}\\
%                         &=& \|\bm d\|^2 (\because \text{Lemma } \ref{lemma:md_norm})
%         \end{eqnarray*}
         Next, using Lemma \ref{lemma:optimal_loss}, $\|\bm h\|^2 \ge \|\bm d\|^2$, and the assumption $0 \le \bm h^\top \bm y \le \bm d^\top \bm y$, we can prove the Theorem \ref{theo:ipw_loss} as follows.
         \begin{eqnarray*}
             \|\bm X \bm w_{\mathrm{ipw}}^*-\bm y\|^2 &=& \frac{(\bm h^\top \bm y)^2}{\|\bm h\|^2}\\
                                         &\le& \frac{(\bm d^\top \bm y)^2}{\|\bm d\|^2}\\
                                         &=& \|\bm X \bm w_{\mathrm{single}}^*-\bm y\|^2
         \end{eqnarray*}
     \end{proof}

%    Theorem \ref{theo:ipw_loss} says that if the value estimated by the IPW estimator on a dataset (i.e. $\bm h^\top \bm y$)
%     is less than the value estimated by MD ($=\bm d^\top \bm y$),
%    the optimal value of the squared loss of FairCEE-IPW \eqref{prob:ipw} is less than that of Single MD \eqref{prob:single}.

     As with the previous theorem, we can show that Multi MD \eqref{prob:multi} also can achieve a smaller training error than  Single MD \eqref{prob:single}.
     \begin{theo}[Multi MD Loss]\label{theo:multi_loss}
         Let $\bm w_{\mathrm{single}}^*$, $\bm w_{\mathrm{multi}}^*$  be optimal solutions of Single MD \eqref{prob:single} and Multi MD \eqref{prob:multi}.
         Suppose that
         \[
            0 \le \frac{1}{K}\sum_{k=1}^K \bm d_{(k)}^\top \bm y_{(k)} \le \bm d^\top \bm y.
         \]
         Then we have
         \[
             \sum_{k=1}^K \|\bm X_{(k)} \bm w_{\mathrm{multi}}^* - \bm y_{(k)}\|^2 \le \|\bm X \bm w_{\mathrm{single}}^*-\bm y\|^2.
         \]
     \end{theo}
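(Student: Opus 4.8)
The plan is to mirror the proof of Theorem~\ref{theo:ipw_loss}: recast the Multi MD problem \eqref{prob:multi} into a form to which Lemma~\ref{lemma:optimal_loss} applies, read off its optimal loss in closed form, and then compare that form against the Single MD loss $\frac{(\bm d^\top \bm y)^2}{\|\bm d\|^2}$ using a norm inequality together with the stated hypothesis. The first observation is that, because the strata partition the dataset, the objective satisfies $\sum_{k=1}^K \|\bm X_{(k)}\bm w - \bm y_{(k)}\|^2 = \|\bm X\bm w - \bm y\|^2$, and each stratum constraint can be rewritten as $\tilde{\bm d}_{(k)}^\top \bm X\bm w = 0$, where $\tilde{\bm d}_{(k)} \in \mathbb R^N$ is the zero-padded extension of $\bm d_{(k)}$. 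Crucially, the vectors $\tilde{\bm d}_{(1)},\ldots,\tilde{\bm d}_{(K)}$ have mutually disjoint supports and are therefore orthogonal.

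Next I would generalize Lemma~\ref{lemma:optimal_loss} from a single constraint to this orthogonal system. Stacking the $\tilde{\bm d}_{(k)}^\top$ as rows of a matrix $\bm C$, the Gram matrix $\bm C\bm C^\top$ is diagonal with entries $\|\bm d_{(k)}\|^2$, so the projection-based loss decouples across strata into $\sum_{k=1}^K \frac{(\bm d_{(k)}^\top \bm y_{(k)})^2}{\|\bm d_{(k)}\|^2}$. Since the hypothesis is phrased through the \emph{average} $\frac{1}{K}\sum_{k} \bm d_{(k)}^\top \bm y_{(k)}$, I would instead summarize the orthogonal system by the single averaged constraint vector $\bm h' = \frac{1}{K}\sum_{k=1}^K \tilde{\bm d}_{(k)}$, for which $\bm h'^\top \bm y = \frac{1}{K}\sum_{k=1}^K \bm d_{(k)}^\top \bm y_{(k)}$ and, by Lemma~\ref{lemma:md_norm} applied in each stratum, $\|\bm h'\|^2 = \frac{1}{K^2}\sum_{k=1}^K \left(\frac{1}{|\mathcal I^+_{(k)}|} + \frac{1}{|\mathcal I^-_{(k)}|}\right)$, so that Lemma~\ref{lemma:optimal_loss} supplies the loss $\frac{(\bm h'^\top \bm y)^2}{\|\bm h'\|^2}$.

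The third step is the norm comparison $\|\bm h'\|^2 \ge \|\bm d\|^2$, playing the role of $\|\bm h\|^2 \ge \|\bm d\|^2$ in Theorem~\ref{theo:ipw_loss}. Writing $p_k = |\mathcal I^+_{(k)}|$ and $q_k = |\mathcal I^-_{(k)}|$ with $\sum_k p_k = |\mathcal I^+|$ and $\sum_k q_k = |\mathcal I^-|$, the Cauchy--Schwarz (AM--HM) inequality gives $\sum_k \frac{1}{p_k} \ge \frac{K^2}{|\mathcal I^+|}$ and $\sum_k \frac{1}{q_k} \ge \frac{K^2}{|\mathcal I^-|}$, hence $\|\bm h'\|^2 \ge \frac{1}{|\mathcal I^+|} + \frac{1}{|\mathcal I^-|} = \|\bm d\|^2$ by Lemma~\ref{lemma:md_norm}. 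Combining $\|\bm h'\|^2 \ge \|\bm d\|^2$ with the assumption $0 \le \frac{1}{K}\sum_k \bm d_{(k)}^\top \bm y_{(k)} \le \bm d^\top \bm y$ then yields $\frac{(\bm h'^\top \bm y)^2}{\|\bm h'\|^2} \le \frac{(\bm d^\top \bm y)^2}{\|\bm d\|^2} = \|\bm X \bm w_{\mathrm{single}}^*-\bm y\|^2$, the desired Single MD bound.

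The main obstacle is the second step. Unlike Single MD and FairCEE-IPW, problem \eqref{prob:multi} has $K$ constraints, so Lemma~\ref{lemma:optimal_loss} does not apply verbatim and the optimal loss must be derived for the full orthogonal system. The delicate point is that the honest decoupled loss $\sum_k \frac{(\bm d_{(k)}^\top \bm y_{(k)})^2}{\|\bm d_{(k)}\|^2}$ is governed by a \emph{sum of squares} of the per-stratum MDs, whereas the hypothesis controls only their \emph{average}; bounding the former by the latter is precisely where care is required, since Cauchy--Schwarz relates them in the unfavorable direction. I would therefore invest the effort in justifying the reduction to the single averaged constraint $\bm h'$ rigorously---arguing that the orthogonality of the stratum constraints lets the $K$ hyperplane constraints be treated jointly through $\bm h'$, so that the numerator appearing in the loss is exactly $\frac{1}{K}\sum_k \bm d_{(k)}^\top \bm y_{(k)}$ matching the hypothesis---and then close with the norm inequality above. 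This reconciliation of the multi-constraint structure with the single-constraint lemma is the step I expect to demand the most justification.
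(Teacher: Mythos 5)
Your overall route---concatenating the strata, comparing a constraint-vector norm against $\|\bm d\|^2$ via Cauchy--Schwarz, and finishing with Lemma~\ref{lemma:optimal_loss} and the hypothesis---is exactly the paper's: your averaged vector $\bm h' = \frac{1}{K}\sum_{k}\tilde{\bm d}_{(k)}$ is $\frac{1}{K}\bm{\tilde d}$ in the paper's notation, so $(\bm h'^\top\bm y)^2/\|\bm h'\|^2 = (\sum_k m_k)^2/\|\bm{\tilde d}\|^2$, which is precisely the expression the paper equates to the Multi MD loss, and your norm inequality $\|\bm h'\|^2\ge\|\bm d\|^2$ is the paper's $\|\bm{\tilde d}\|^2\ge K^2\|\bm d\|^2$ (that part of your argument is fine). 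However, the step you flag as ``demanding the most justification'' is a genuine gap, and it cannot be closed in the way you propose. Replacing the $K$ constraints $\tilde{\bm d}_{(k)}^\top\bm X\bm w=0$ by the single averaged constraint $\bm h'^\top\bm X\bm w=0$ \emph{enlarges} the feasible set, so Lemma~\ref{lemma:optimal_loss} applied to $\bm h'$ yields only the lower bound $\sum_k\|\bm X_{(k)}\bm w^*_{\mathrm{multi}}-\bm y_{(k)}\|^2 \ge (\bm h'^\top\bm y)^2/\|\bm h'\|^2$---the reverse of what the theorem needs. Your ``honest decoupled loss'' $\sum_k (\bm d_{(k)}^\top\bm y_{(k)})^2/\|\bm d_{(k)}\|^2$ is the correct closed form for the $K$-constraint problem (under the same implicit assumptions as Lemma~\ref{lemma:optimal_loss}), and, as you yourself observe, Cauchy--Schwarz only shows it \emph{dominates} $(\sum_k m_k)^2/\|\bm{\tilde d}\|^2$; the stated hypothesis on the average of the $m_k$ does not control the sum of squares (per-stratum MDs of large magnitude and alternating sign have a small average but a large decoupled loss). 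So the proposal, as written, does not prove the theorem.

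You should also know that the paper's own proof makes the identical unjustified move: it asserts $\sum_k\|\bm X_{(k)}\bm w^*_{\mathrm{multi}}-\bm y_{(k)}\|^2 = (\sum_k m_k)^2/\|\bm{\tilde d}\|^2$ ``by Lemma~\ref{lemma:optimal_loss}'', silently treating the $K$-constraint problem \eqref{prob:multi} as if it were the single aggregated-constraint problem. Your diagnosis of where the difficulty lies is therefore sharper than the paper's exposition, but neither argument is complete as it stands; closing the gap appears to require per-stratum control (e.g., $0\le m_k$ and $m_k^2/\|\bm d_{(k)}\|^2 \le \frac{1}{K}\,(\bm d^\top\bm y)^2/\|\bm d\|^2$ for every $k$) rather than control of the average alone.
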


      %Here, we prove Theorem \ref{theo:multi_loss}.

      \begin{proof}\label{proof:multi_loss}
          Let $m_k$ be the MD of the $k$-th stratum (i.e. $m_k=\bm d_{(k)}^\top \bm y_{(k)}$).

          Let $\bm {\tilde d}$ be as follows.
          \[
                 \bm {\tilde d} = (\bm d_{(1)}^{\top},\bm d_{(2)}^{\top},...,\bm d_{(K)}^{\top})
          \]
          We define $\bm {\tilde X}$, $\bm {\tilde y}$ in the same way.
          By using this new notation and the original problem's constraint, the optimal solution of the original problem satisfies this following equality.

          \[
              \bm {\tilde d}^{\top}  \bm {\tilde X} \bm w_{\mathrm{multi}}^{*}= \sum_{k=1}^K \bm d_{(k)}^\top  \bm X_{(k)} \bm w_{\mathrm{multi}}^*=0
          \]
          The original problem can be rewritten as follows.
          \[
              \sum_{k=1}^K \|\bm X_{(k)} \bm w-\bm y_{(k)}\|^2 = \|\bm {\tilde X} \bm w - \bm {\tilde y} \|^2,\  \bm {\tilde d}^\top \bm {\tilde y}=\sum_{k=1}^K m_k
          \]

%          Then, from Lemma \ref{lemma:optimal_loss}, we can get
%
%          \[
%              \sum_{k=1}^K \|\bm X_{(k)} \bm w-\bm y_{(k)}\|^2 = \frac{(\sum_{k=1}^K m_k)^2}{\|\bm {\tilde d}\|^2}.
%          \]

          We get the following inequality:
          \begin{eqnarray*}
                                 \|\bm {\tilde d}\|^2  &=& \sum_{k=1}^K \frac{1}{|\mathcal I_k^+|} + \sum_{k=1}^K \frac{1}{|\mathcal I_k^-|}\\
                                   &\ge& \frac{K^2}{|\mathcal I^+|} + \frac{K^2}{|\mathcal I^-|} \\ %(\because \sum_{k=1}^K |\mathcal I_k^+|=|\mathcal I^+|, \sum_{k=1}^K |\mathcal I_k^-|=|\mathcal I^-|)\\
                                   &=& K^2\|\bm d\|^2(\because \text{Lemma }\ref{lemma:md_norm}).
          \end{eqnarray*}
          where $\mathcal I^+_k$ and $\mathcal I^-_k$ are the index subsets of $\mathcal I^{+}, \mathcal I^{-}$ for the $k$-th stratum.

          From this inequality and the assumption $0 \le \frac{1}{K}\sum_{k=1}^K m_k \le \bm d^\top \bm y$,
          we can prove Theorem \ref{theo:multi_loss} as follows.
          \begin{eqnarray*}
              \sum_{k=1}^K \|\bm X_{(k)} \bm w_{\mathrm{multi}}^*-\bm y_{(k)}\|^2 &=& \frac{(\sum_{k=1}^K m_k)^2}{\|\bm {\tilde d}\|^2} (\because \text{Lemma } \ref{lemma:optimal_loss})\\
                                                    &\le& \frac{(\frac{1}{K}\sum_{k=1}^K m_k)^2}{\|\bm d\|^2} \le \frac{(\bm d^\top \bm y)^2}{\|\bm d\|^2}\\
                                                    &=& \|\bm X \bm w_{\mathrm{single}}^*-\bm y\|^2.
          \end{eqnarray*}
      \end{proof}

     Theorem \ref{theo:multi_loss} says that if
     the mean of MD in each stratum corresponding to $\frac{1}{K}\sum_{k=1}^K \bm d_{(k)}^\top \bm y_{(k)}$ is smaller than MD on the entire dataset,
     the optimal value of the squared loss in Multi MD \eqref{prob:multi} is less than that of Single MD \eqref{prob:single}.

     When the dataset includes explanatory bias and the model of the propensity score is correct,
     by the definition of discrimination \eqref{def:def_disc},
     the causal effect estimated by the IPW estimator and stratification are less than MD.
     Therefore, these assumptions of Theorem \ref{theo:ipw_loss} and \ref{theo:multi_loss} are reasonable for our settings.
     From these two theorems, we can say both FairCEE-IPW and
     Multi MD give fair predictions and their losses are smaller than that of Single MD.
     We leave the analysis of FairCEE-DR to future work.

    \section{Numerical experiments}\label{chap:experiments}
     We obtained numerical results on synthetic data and real-world data in an attempt to answer the following research questions.
     \begin{itemize}
       \item (RQ1) Do the three CASEs mentioned in Section \ref{subsec : disadvantages} degrade the performance of Multi MD?
       \item (RQ2) Do FairCEEs work in the three CASEs?
       \item (RQ3) Do FairCEEs work on real-world data?
     \end{itemize}
     Since we know the true explanatory bias in the synthetic data (it is unknown in the real-world data), it is easy to understand the results.
     Hence, we first analyzed the behavior of FairCEEs and Multi MD in detail in an experiment using synthetic data.
     After that, we experimented using three real-world data in regression and binary classification tasks.

         \subsection{Dataset and Experimental Setup}\label{sec: dataset_setup}

     %                   The details of the synthetic and real-world data as well as the experimental setup are explained below.

             \subsubsection{Synthetic Data}
             Here we explain how to generate synthetic dataset for the regression task.
             We focus on only the synthetic dataset for the regression because it is not easy to generate a dataset for classification whose explanatory bias is known.
              We generated $100r\%$ of $N$ instances with $S=1$. The rest of the dataset had instances with $S=0$.
              If the instance had a positive sensitive feature (i.e. $S=1$), we drew the explanatory feature values $\bm X_e$ and non-explanatory feature values $\bm X_n$ from the following different multivariate normal distributions.
              \[
              \bm X_e \sim N(\bm \mu^+_{e},\bm I), ~~
              \bm X_n \sim N(\bm \mu^+_{n},\bm I)
              \]
              In the same way, if the instance had a negative sensitive feature (i.e. $S=0$),
              we drew samples from the following distributions.
              \[
              \bm X_e \sim N(\bm \mu^-_{e},\bm I), ~~ \bm  X_n \sim N(\bm \mu^-_{n},\bm I)
              \]
              where $\bm \mu^+_{e}, \bm \mu^-_{e} \in \mathbb R^{d_e}, \bm \mu^+_{n}, \bm \mu^-_{n} \in \mathbb R^{d-d_e}$ and $\bm I$ is an identity matrix.
              After sampling $\bm X_e$ and $\bm X_n$, we generate the  label $y_i$ of the $i$-th instance as follows.
              \[
              y_i = \bm w_e^\top \bm x_{i,e} + \bm w_n^\top \bm x_{i,n} + w_s s_i + \epsilon_i,
              \]
              where $\bm x_{i,e},\bm x_{i,n}, \epsilon_i$ are explanatory feature values,
              non-explanatory feature values and noise of the $i$-th instance respectively ($\epsilon_i \sim N(0,1)$).

              In this synthetic data, we can know the discrimination \eqref{def:causaleffect} and the true explanatory bias.
              \begin{eqnarray*}
%              \[
                \mbox{Discrim.} = &  \bm w_{n}^\top (\bm \mu^+_n-\bm \mu^-_n) + w_s \\
%              \]
%
 %             \[
                 \mbox{explanatory bias}=  &\bm w_{e}^\top (\bm \mu^+_e-\bm \mu^-_e)
%              \]
\end{eqnarray*}
              %$\bm w_e,\bm w_n, w_s,\bm \mu^+_n, \bm \mu^-_n, \bm \mu^+_e, \bm \mu^-_e$
              Since we know the values of these parameters used to generate the synthetic data,
              we can calculate Discrim. and the explanatory bias.

              In our experiments, we created the three CASEs of Section \ref{subsec : disadvantages} by modifying the parameters of the synthetic data.
              Unless otherwise specified, we generated the synthetic data with $N=2000$, $d=14$, $d_e=4$, and  $r=0.5$.
              We set $r=0.8$ when generating Imbalance CASE and $d=7$ and $d_e=2$ when generating Degenerate CASE.
              In Inferred CASE, we set $\bm \mu^+_{e}=1.5 $ and $\bm \mu^-_{e}=0$ (in other case, $\bm \mu^+_{e}=1.0$ by default).

             \subsubsection{Real-world Data}

% 大きいバージョン

%             \begin{table}
%                 \centering
%                 \begin{tabular}{|l|l|l|l|l|l|l|l|l|l|}\hline
%              datasets  &  $N$   & $d$  & $d_e$ & $S$ &  $(|\mathcal I^+|,|\mathcal I^-|)$ & $Y$  & MD & IPW & DR \\ \hline \hline
%              C\&C & 1994 & 99 & 4   & race  & (970, 1024)   &  Crime Rate  & 0.208  & 0.032 & 0.010  \\ \hline
%              LSAC & 20798 & 14 & 2   & race  & (19597,1201)   &  GPA  & 0.358 & 0.381 &  0.414 \\ \hline
%              COMPAS & 5855 &  14 & 2   & race  & (2914, 2941)   &  Recidivism  & 0.159 & 0.041 & 0.056 \\ \hline
%              ADULT & 30162 & 101  & 29   & gender  & (20380, 9782)   &  Income  & 0.225 & 0.205  & 0.063 \\ \hline
%              \end{tabular}
%              \caption{Statistics of the real-world datasets. $N$, $d$, and $d_e$  are  the number of instances, features and explanatory features,
%              respectively. MD, IPW, DR are estimated on each dataset.}
%              \label{tab:stats_real_world_data}
%             \end{table}

%% LSACいらないならこちらの方がいいかも

             \begin{table}
                 \centering
                % \small
                 \begin{tabular}{|l|l|l|l|l|l|}\hline
              datasets  &  $N$   & $d$   & $S$ &  $(|\mathcal I^+|,|\mathcal I^-|)$ & $Y$ \\ \hline \hline
              C\&C & 1994 & 99   & race  & (970, 1024)   &  Crime Rate    \\ \hline
%              LSAC & 20798 & 14    & race  & (19597,1201)   &  GPA  \\ \hline
              COMPAS & 5855 &  14    & race  & (2914, 2941)   &  Recidivism  \\ \hline
              ADULT & 30162 & 101    & gender  & (20380, 9782)   &  Income \\ \hline
              \end{tabular}
              \caption{Statistics of the real-world datasets.}
              \label{tab:stats_real_world_data}
             \end{table}

% 小さいバージョン
%        \begin{table}
%             \centering
%             \small %ココ
%                 \begin{tabular}{|l|l|l|l|l|l|}\hline
%                 datasets  &   $d$   & $(|\mathcal I^+|,|\mathcal I^-|)$  & MD & IPW & DR \\ \hline \hline
%                 C\&C & 99    & (970, 1024)    & 0.21  & 0.03 & 0.01  \\ \hline
% LSAC除外                 LSAC &  14    & (19597,1201)   & 0.36 & 0.38 &  0.41 \\ \hline
%                 COMPAS &  14    &  (2914, 2941)    & 0.16 & 0.04 & 0.06 \\ \hline
%                 ADULT & 101   & (20380, 9782)     & 0.23 & 0.21  & 0.06 \\ \hline
%                 \end{tabular}
%                 \caption{Statistics of the real-world datasets. $d$ is the number of features. MD, IPW, DR are estimated on each dataset.}
%                \label{tab:stats_real_world_data}
%         \end{table}

             We used three real-world datasets.
             We conducted experiments on two tasks, regression and binary classification.
             In the regression task, we used C\&C\footnote{https://archive.ics.uci.edu/ml/datasets/communities+and+crime}.
             %and LSAC \cite{lsac}.
             In the binary classification task, we used COMPAS \cite{compas} and ADULT\footnote{https://archive.ics.uci.edu/ml/datasets/adult}.
             Table \ref{tab:stats_real_world_data} shows the statistics of these real-world data after the preprocessing.

             The ADULT\footnote{The ADULT dataset is a weighted dataset. So this is not a good real-world dataset as typically used.
             However, we use it to verify the performance of FairCEEs since much research use this dataset \cite{Zafar2015FairnessCM,FairRegression,Calibration,Bengio2018,empirical_risk}.} and C\&C datasets are in the UCI repository \cite{UCI}.
             %, and the LSAC dataset is in \cite{komiyama}\footnote{Because the URL of the LSAC dataset has expired, we obtained it from the repository at https://github.com/jkomiyama/fairregresion of \cite{komiyama}}.
             We followed \cite{komiyama} when preprocessing the COMPAS datasets and followed \cite{Calders2013} when preprocessing the C\&C dataset. When we preprocessed the ADULT dataset, we removed missing data and binarized its categorical features and normalized its continuous features.
             We chose the explanatory features by following \cite{Calders2013} for C\&C and by following \cite{IPS_discovery} for ADULT. In the COMPAS dataset, we determined the explanatory features by conducting a dependency analysis following \cite{Calders2013}, whereby features that are highly correlated to both the target and the sensitive feature represent potential covariates.

             \subsubsection{Experimental Setup}
             We used FairCEE-IPW \eqref{prob:ipw} and FairCEE-DR \eqref{prob:dr} (respectively referred to as fcee\_ipw and fcee\_dr in the figures below).
             For comparison, we compared FairCEEs to methods using propensity score analysis\footnote{The methods using causal diagrams also deal with the explanatory bias.
             However, it is difficult to compare it fairly because the performance of methods using causal diagrams depends on the causal diagrams we assume.
             %             and this comparison is out of scope.
             }.
             We used the Multi MD \eqref{prob:multi} and Single MD \eqref{prob:single} models for comparison,
             The figures below refer to Multi MD with $K$ strata as multi\_$K$ and Single MD as single.
             The sensitive feature was not used in the prediction models. %We used logistic regression to estimate the propensity score.
             We conducted the experiments on the synthetic data 50 times and added error bars to the results.
             We ran our script once on each real-world dataset because solving classification problem by FariCEEs and Multi MD took more time than regression task.

         \subsection{Disadvantages of Multi MD (RQ1)}\label{sec: RQ1}

            \begin{figure}[t]
                 \centering
                 \includegraphics[width=0.8\linewidth,height=3.8cm]{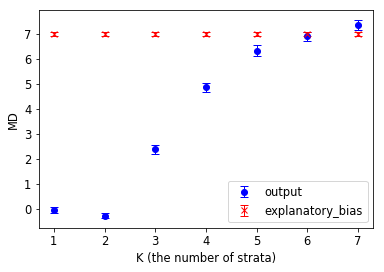}
                 \caption{Results of the MDs of Multi MD for different numbers of strata $K$.
                 By the definition of Discrim. \eqref{def:def_disc}, it is desirable that the MD of Multi MD is close to the explanatory bias (i.e. Discrim. is close to 0).
                 When $K$ is large enough ($K=5$,$6$ and $7$), Multi MD removes Discrim. \eqref{def:def_disc} correctly. However, too small a $K$ ends up removing explanatory bias and causes reverse discrimination.}
                 \label{fig:various_sn_md}
             \end{figure}

                 \begin{figure*}[t]
                   \centering
                     \begin{tabular}{c}

                       \begin{minipage}{0.33\hsize}
                           \centering
                           \includegraphics[width=\columnwidth,height=3.8cm]{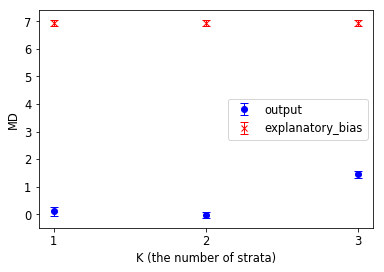}
                           \subcaption{Imbalance CASE}\label{fig:case1_RQ1}
                       \end{minipage}

                       \begin{minipage}{0.33\hsize}
                           \centering
                           \includegraphics[width=\columnwidth,height=3.8cm]{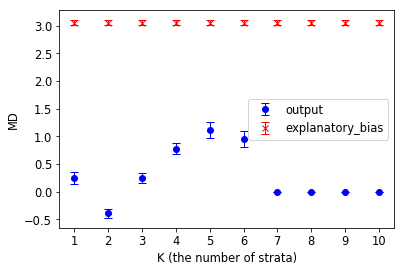}
                           \subcaption{Degenerate CASE}\label{fig:case2_RQ1}
                       \end{minipage}

                     \begin{minipage}{0.33\hsize}
                         \centering
                         \includegraphics[width=\columnwidth,height=3.8cm]{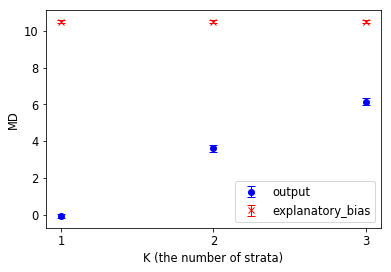}
                         \subcaption{Inferred CASE}\label{fig:case3_RQ1}
                     \end{minipage}
                     \end{tabular}
                     \caption{Results of experiments examining the disadvantages of Multi MD.
                      \eqref{fig:case1_RQ1} shows that imbalances with respect to the sensitive feature make it impossible to set $4 \le K$.
                      \eqref{fig:case2_RQ1} shows that when $K$ is larger than the number of features $d$ (=7), the solutions are zero vectors and the resultant MD is equal to 0.
                      \eqref{fig:case3_RQ1} shows that  it is impossible to set $4 \le K$ due to $z_i \approx s_i$.
                      In these cases, Multi MD incorrectly removes the explanatory bias.
                     }
                     \label{fig:RQ1}
                 \end{figure*}

         Now let us discuss the results of our experiment investigating the performance of Multi MD in the three CASEs described in Section \ref{subsec : disadvantages}.
         First, we show that Multi MD with a small $K$ incorrectly removes explanatory bias. Next, we show that it is difficult to set a large enough $K$ in the three CASEs.

         In Figures \ref{fig:various_sn_md} and \ref{fig:RQ1}, by the definition of Discrim. \eqref{def:def_disc}, it is desirable that the MD of Multi MD is close to the explanatory bias (i.e. Discrim. is close to 0).
         Figure \ref{fig:various_sn_md} shows that the MD of Multi MD with few strata is much less than the explanatory bias. This is because when $K$ is too small, each stratum has a range of propensity scores and some explanatory bias may remain in each stratum.

         Figure \ref{fig:case1_RQ1} indicated that it is impossible to split the dataset into more than three strata because of imbalances with respect to the sensitive feature.
         Figure \ref{fig:case2_RQ1} shows that MDs of Multi MD with $K \ge 7 = d$ are all zero. This is because degeneration causes the optimal solution of Multi MD to become the zero vector. When the number of strata $K$ is less than seven, the MD of the outputs is much less than the explanatory bias. This represents reverse discrimination because of few strata.
         As well as Figure \ref{fig:case1_RQ1}, Figure \ref{fig:case3_RQ1} shows that it is impossible to split the dataset into more than three strata because the estimated values of propensity scores are nearly equal to the sensitive features.
         Figures \ref{fig:case1_RQ1}, \ref{fig:case2_RQ1} and  \ref{fig:case3_RQ1} show that in the three CASEs, Multi MD can not increase $K$ appropriately and Multi MD mistakenly removes the explanatory bias.

          \subsection{Comparison between FairCEEs and Multi MD in the three CASEs (RQ2)}\label{sec: RQ2}

          \begin{figure*}[t]
            \centering
              \begin{tabular}{c}
                % 1
                \begin{minipage}{0.33\hsize}
                  \centering
                    \includegraphics[width=\columnwidth, height=3.8cm]{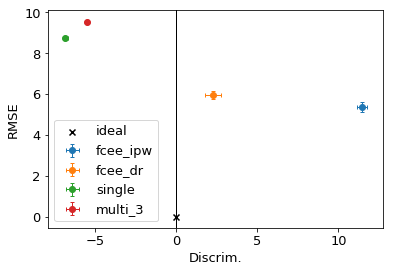}
                    \subcaption{Imbalance CASE}\label{fig:case1_RQ2}
                \end{minipage}

                % 2
                \begin{minipage}{0.33\hsize}
                  \centering
                    \includegraphics[width=\columnwidth, height=3.8cm]{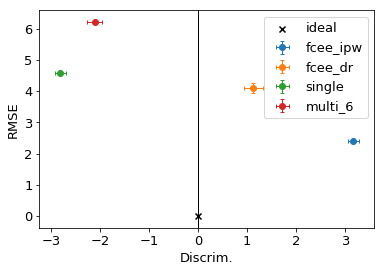}
                     \subcaption{Degenerate CASE}\label{fig:case2_RQ2}
                 \end{minipage}

                 % 2
                 \begin{minipage}{0.33\hsize}
                   \centering
                     \includegraphics[width=\columnwidth, height=3.8cm]{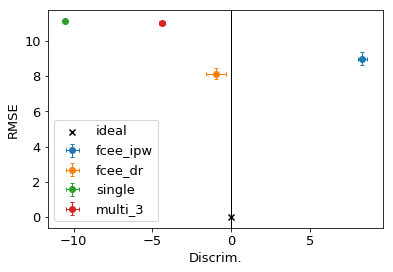}
                      \subcaption{Inferred CASE}\label{fig:case3_RQ2}
                \end{minipage}

              \end{tabular}
              \caption{Comparison of FairCEEs and Multi MD in the three CASEs. The horizontal axis represents Discrim. \eqref{def:def_disc}, and the vertical axis represents RMSE. The loss and Discrim. of FairCEE-DR are closer to 0 (the ideal point in this figure) than that of Multi MD. This means FairCEE-DR outperforms Multi MD in terms of loss and fairness in the three CASEs.}
              \label{fig:RQ2}

          \end{figure*}

              \begin{table}[t]
                \centering
                %\small %ココ
                    \begin{tabular}{|l|l|l|l|l|}
                    \hline
                          & Single MD             & Multi MD    & IPW             & DR             \\
                    \hline \hline
                    Imbalance CASE & 48.614 & 10.262  & 6.808 & {\bf 6.538} \\ \hline
                    Degenerate CASE & 9.418  & 0.793   & 0.221 & {\bf 0.217} \\ \hline
                    Inferred CASE & 110.459 & 14.870   & 21.361 & {\bf 1.603} \\
                    \hline
                    \end{tabular}
                    \caption{Squared error (SE) between each estimated value and Discrim. \eqref{def:def_disc} on the dataset in the three CASEs.
                    %(i.e. $\mbox{SE} = (*-\mbox{Discrim.})^2$, where $* \in \{\mbox{MD, IPW, DR}\}$.)
                    This table shows that the IPW and DR estimators estimate Discrim. more accurately than Single MD or Multi MD in Imbalance CASE and Degenerate CASE, and the DR estimator is the most accurate in all the CASEs including Inferred CASE.
                     %which are used in Single and Multi MD as the constraint.
                     }
                    \label{tab:causal_estimation_result}

              \end{table}

         Next we show that FairCEEs outperform Multi MD in terms of loss and fairness in the three CASEs.
         Figure \ref{fig:RQ2} is the loss-bias tradeoff graph in the three CASEs. The vertical axis represents RMSE, while the horizontal axis represents Discrim. \eqref{def:def_disc}. Note that, as mentioned in Section \ref{sec: dataset_setup}, we know the explanatory bias only in the experiment on synthetic data. Our goal here is to train a model that is fair at the expense of a small increase in loss. Accordingly, the loss and Discrim. should be close to 0 in Figure \ref{fig:RQ2} (this corresponds to the {\em ideal} point).
         Figure \ref{fig:RQ2} is the result of the comparison between FairCEEs and Multi MD in the three CASEs. Since it was found in Section \ref{sec: RQ1} that $K$ cannot be more than three in Imbalance CASE and Inferred CASE or more than seven in Degenerate CASE, we compare FairCEEs with Multi MD with $K=3$ in Imbalance CASE and Inferred CASE and $K=6$ in Degenerate CASE. Figure \ref{fig:RQ2} shows that the RMSE and Discrim. of FairCEE-DR are closer to 0 than those of Multi MD in the three CASEs.
         These results show that our FairCEE-DR outperforms Multi MD in terms of loss and fairness in the three CASEs.

         In addition, we calculated the squared error between each estimated value for Discrim. \eqref{def:def_disc} and the true value of Discrim. in the three CASEs
          to investigate how precisely the IPW and DR estimators estimate Discrim. \eqref{def:def_disc}.
         The result is in Table \ref{tab:causal_estimation_result}.
         This table indicates that the squared error between the IPW or DR estimator and Discrim.\footnote{Note that SE of Multi MD is defined as $\frac{1}{K}\sum_{k=1}^K(\mbox{MD}_k-\mbox{Discrim.})^2$, where $\mbox{MD}_k$ is the MD of the $k$-th stratum.} is much smaller than that of MD or Stratification used in Single or Multi MD in Imbalance CASE and Degenerate CASE.
         It also shows the DR estimator is the most accurate in all the CASEs including Inferred CASE.
         This is because the model for the propensity score is wrong and the IPW estimator can not accurately estimate Discrim., whereas the DR estimator works appropriately because of the robustness due to the estimators for $Y_1$ and $Y_0$.% described in Section \ref{sec: dr_model}.
         %In other words, the IPW and DR estimators can estimate Discrim. more accurately than other methods in the three CASEs.

                      \begin{figure*}[t]
                          \centering
                              \begin{tabular}{c}
                            % 1
                            \begin{minipage}{0.33\hsize}
                              \centering
                                \includegraphics[width=\columnwidth, height=3.8cm]{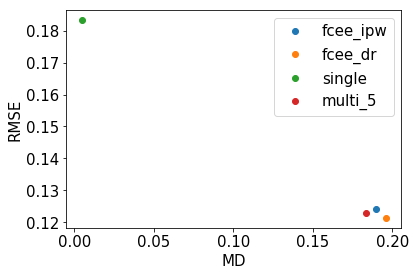}
                                \subcaption{C\&C}\label{fig:candc}

                            \end{minipage}

                            % 1
                            \begin{minipage}{0.33\hsize}
                              \centering
                                \includegraphics[width=\columnwidth, height=3.8cm]{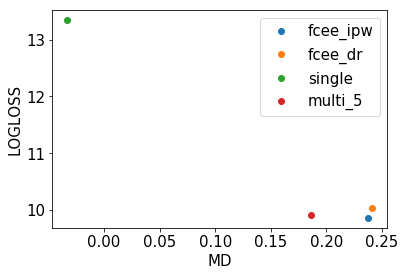}
                                \subcaption{COMPAS}\label{fig:compas}

                            \end{minipage}

                            \begin{minipage}{0.33\hsize}
                              \centering
                                \includegraphics[width=\columnwidth, height=3.8cm]{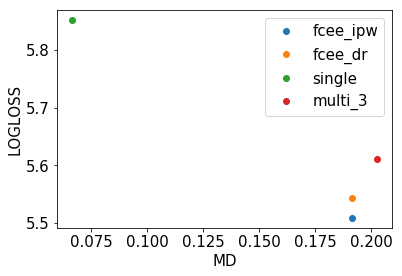}
                                \subcaption{ADULT}\label{fig:adult}

                            \end{minipage}

             %               \begin{minipage}{0.5\hsize}
             %                 \centering
             %                   \includegraphics[width=\columnwidth]{../images/lsac/loss_bias.png}
             %                   \subcaption{LSAC}\label{fig:lsac}

         %                   \end{minipage}

                          \end{tabular}
                          \caption{Results on real-world data.
                           \eqref{fig:candc} and \eqref{fig:compas} show that FairCEEs and Multi MD perform comparably.
                            It is difficult to determine which is superior because we do not know the true explanatory bias. \eqref{fig:adult} shows that the loss and MD of FairCEEs are lower than those of Multi MD.
                            This is because ADULT is a dataset with imbalanced sensitive features, as in Imbalance CASE in Section \ref{sec: RQ2}.
                            %(\ref{fig:lsac}) shows that the loss and MD of Single MD are very small. This is probably because the LSAC dataset includes very little explanatory bias and MD can quantify Discrim.\eqref{def:def_disc}.
                           }
                          \label{fig:real_data_result}

                      \end{figure*}

           \subsection{Evaluation in Real-world Data (RQ3)}\label{sec: RQ3}

         We visualized the results for the real-world data in the form of a loss-bias tradeoff graph as in Section \ref{sec: RQ2}. Note that we can not calculate Discrim. because we do not know the explanatory bias in the experiment on the real-world data. Thus, MD is the horizontal axis instead of Discrim. Here, because MD includes explanatory bias, it is not desirable that MD equal 0. Thus, we can not plot the {\em ideal} point, unlike in Figure \ref{fig:RQ2}. We used the logistic loss instead of RMSE in a binary classification task.

         %Note that the results for the LSAC dataset are in Appendix \ref{appendix: lsac_result}, because the LSAC dataset appears to include very little explanatory bias (see MD, IPW, DR in Table \ref{tab:stats_real_world_data}, where the estimated values of IPW and DR of the LSAC dataset are more than its MD and those of the other three datasets are much smaller than their MD).

         Figures \ref{fig:candc} and \ref{fig:compas} show that the MDs of the FairCEEs are larger than that of Multi MD, but the losses of FairCEE-DR and FairCEE-IPW are smaller than those of Multi MD in Figure \ref{fig:candc} and \ref{fig:compas}, respectively. It is difficult to say which is superior, as we do not know the true explanatory bias.
         Figure \ref{fig:candc} shows that the loss and MD of FairCEE-IPW are larger than those of Multi MD. This is probably because the model for the propensity score is not so good and the IPW estimator can not estimate the causal effect precisely. On the other hand, FairCEE-DR appropriately removes discrimination. This is because DR estimator can robustly estimate the causal effect by utilizing the estimators for $Y_1$ and $Y_0$.

         The ADULT dataset in Figure \ref{fig:adult} is imbalanced with respect to the sensitive feature (see Table \ref{tab:stats_real_world_data}). Therefore, for the same reason as in Imbalance CASE, we can not increase $K$ appropriately. Actually, Multi MD did not work when we set $K=4$. Therefore, we used Multi MD with three strata on the ADULT dataset.
         Figure \ref{fig:adult} shows that the loss and MD of FairCEEs are lower than those of Multi MD.
         This indicates that FairCEEs outperform Multi MD in terms of loss and fairness in the ADULT dataset, which belongs to Imbalance CASE.
         This result is consistent with the results of the experiment on synthetic data in Section \ref{sec: RQ2}.

     \section{Conclusion and Future Work}\label{chap:conclusion}
     We described FairCEEs based on IPW and DR estimators of causal effect that can remove discrimination while keeping explanatory bias.
     The proximal gradient method is applicable to FairCEEs with the logistic loss.
     We showed that FairCEE-IPW and Multi MD theoretically outperform Single MD in regression tasks.
     Our experiment on synthetic and real-world data indicated that FairCEE-DR outperforms Multi MD in cases where Multi MD does not work well due to too few strata.

%     \COMM{AT}{How about adding the following sentences?}
%     \COMM{HO}{I agree to your suggestion. Thank you.}
     By including a linear constraint of IPW=0 or DR=0, existing machine learning models can take into account reasonable fairness that eliminates explanatory bias. In this paper we only focused on regression and classification machine learning models, but this constraint can be used in a general framework. We would like to guarantee the validity of the general machine learning fairness model for future work.
      We have considered the setting with one binary sensitive feature in this paper. It may be possible to generalize most results of the paper to the settings of multiple sensitive features or a sensitive feature with multiple levels, i.e. $S \in \{ 0, 1, 2,...N_s\}$
      by considering causal effects in regard to each sensitive feature. We would like to investigate it further.

%     There are two other interesting directions of future study. One is the analysis of FairCEE-DR. The other is formulating a multi-class classification problem for FairCEEs.

%\COMM{AT}{Can we discuss the extension of our model for multiple sensitive features?}
%\COMM{HO}{notationのところにも書いたのですが、multiple sensitive featureにした場合はそれぞれのsensitive featureについての因果効果を制約に入れることで(複数制約になりますが)拡張可能ではないかと思います。}

%%
%% The acknowledgments section is defined using the "acks" environment
%% (and NOT an unnumbered section). This ensures the proper
%% identification of the section in the article metadata, and the
%% consistent spelling of the heading.

%\begin{acks}
%To Robert, for the bagels and explaining CMYK and color spaces.
%\end{acks}

%%
%% The next two lines define the bibliography style to be used, and
%% the bibliography file.
\bibliographystyle{ACM-Reference-Format}
\bibliography{main}

%%% -*-BibTeX-*-
%%% Do NOT edit. File created by BibTeX with style
%%% ACM-Reference-Format-Journals [18-Jan-2012].

\begin{thebibliography}{44}

%%% ====================================================================
%%% NOTE TO THE USER: you can override these defaults by providing
%%% customized versions of any of these macros before the \bibliography
%%% command.  Each of them MUST provide its own final punctuation,
%%% except for \shownote{}, \showDOI{}, and \showURL{}.  The latter two
%%% do not use final punctuation, in order to avoid confusing it with
%%% the Web address.
%%%
%%% To suppress output of a particular field, define its macro to expand
%%% to an empty string, or better, \unskip, like this:
%%%
%%% \newcommand{\showDOI}[1]{\unskip}   % LaTeX syntax
%%%
%%% \def \showDOI #1{\unskip}           % plain TeX syntax
%%%
%%% ====================================================================

\ifx \showCODEN    \undefined \def \showCODEN     #1{\unskip}     \fi
\ifx \showDOI      \undefined \def \showDOI       #1{#1}\fi
\ifx \showISBNx    \undefined \def \showISBNx     #1{\unskip}     \fi
\ifx \showISBNxiii \undefined \def \showISBNxiii  #1{\unskip}     \fi
\ifx \showISSN     \undefined \def \showISSN      #1{\unskip}     \fi
\ifx \showLCCN     \undefined \def \showLCCN      #1{\unskip}     \fi
\ifx \shownote     \undefined \def \shownote      #1{#1}          \fi
\ifx \showarticletitle \undefined \def \showarticletitle #1{#1}   \fi
\ifx \showURL      \undefined \def \showURL       {\relax}        \fi
% The following commands are used for tagged output and should be
% invisible to TeX
\providecommand\bibfield[2]{#2}
\providecommand\bibinfo[2]{#2}
\providecommand\natexlab[1]{#1}
\providecommand\showeprint[2][]{arXiv:#2}

\bibitem[\protect\citeauthoryear{Agarwal, Beygelzimer, Dudik, Langford, and
  Wallach}{Agarwal et~al\mbox{.}}{2018}]%
        {Agarwal}
\bibfield{author}{\bibinfo{person}{Alekh Agarwal}, \bibinfo{person}{Alina
  Beygelzimer}, \bibinfo{person}{Miroslav Dudik}, \bibinfo{person}{John
  Langford}, {and} \bibinfo{person}{Hanna Wallach}.}
  \bibinfo{year}{2018}\natexlab{}.
\newblock \showarticletitle{A Reductions Approach to Fair Classification}. In
  \bibinfo{booktitle}{\emph{Proceedings of the 35th International Conference on
  Machine Learning}} \emph{(\bibinfo{series}{Proceedings of Machine Learning
  Research})}, \bibfield{editor}{\bibinfo{person}{Jennifer Dy} {and}
  \bibinfo{person}{Andreas Krause}} (Eds.), Vol.~\bibinfo{volume}{80}.
  \bibinfo{publisher}{PMLR}, \bibinfo{address}{Stockholmsm^^c3^^a4ssan,
  Stockholm Sweden}, \bibinfo{pages}{60--69}.
\newblock
\urldef\tempurl%
\url{http://proceedings.mlr.press/v80/agarwal18a.html}
\showURL{%
\tempurl}


\bibitem[\protect\citeauthoryear{Agarwal, Dudik, and Wu}{Agarwal
  et~al\mbox{.}}{2019}]%
        {FairRegression}
\bibfield{author}{\bibinfo{person}{Alekh Agarwal}, \bibinfo{person}{Miroslav
  Dudik}, {and} \bibinfo{person}{Zhiwei~Steven Wu}.}
  \bibinfo{year}{2019}\natexlab{}.
\newblock \showarticletitle{Fair Regression: Quantitative Definitions and
  Reduction-Based Algorithms}. In \bibinfo{booktitle}{\emph{Proceedings of the
  36th International Conference on Machine Learning}}
  \emph{(\bibinfo{series}{Proceedings of Machine Learning Research})},
  \bibfield{editor}{\bibinfo{person}{Kamalika Chaudhuri} {and}
  \bibinfo{person}{Ruslan Salakhutdinov}} (Eds.), Vol.~\bibinfo{volume}{97}.
  \bibinfo{publisher}{PMLR}, \bibinfo{address}{Long Beach, California, USA},
  \bibinfo{pages}{120--129}.
\newblock
\urldef\tempurl%
\url{http://proceedings.mlr.press/v97/agarwal19d.html}
\showURL{%
\tempurl}


\bibitem[\protect\citeauthoryear{Angwin and Kirchner}{Angwin and
  Kirchner}{2016}]%
        {compas}
\bibfield{author}{\bibinfo{person}{Larson Jeff Mattu~Surya Angwin, Julia} {and}
  \bibinfo{person}{Lauren. Kirchner}.} \bibinfo{year}{2016}\natexlab{}.
\newblock \bibinfo{title}{Machine Bias: There's software used across the
  country to predict future criminals}.
\newblock
\newblock
\newblock
\shownote{\url{https://www.propublica.org/}.}


\bibitem[\protect\citeauthoryear{Bang and Robins}{Bang and Robins}{2005}]%
        {DR}
\bibfield{author}{\bibinfo{person}{Heejung Bang} {and}
  \bibinfo{person}{James~M. Robins}.} \bibinfo{year}{2005}\natexlab{}.
\newblock \showarticletitle{Doubly robust estimation in missing data and causal
  inference models.}
\newblock \bibinfo{journal}{\emph{Biometrics}}  \bibinfo{volume}{61}
  (\bibinfo{year}{2005}), \bibinfo{pages}{692--972}.
\newblock


\bibitem[\protect\citeauthoryear{Beck}{Beck}{2017}]%
        {Book:Beck:2017}
\bibfield{author}{\bibinfo{person}{Amir Beck}.}
  \bibinfo{year}{2017}\natexlab{}.
\newblock \bibinfo{booktitle}{\emph{First-Order Methods in Optimization}}.
\newblock \bibinfo{publisher}{SIAM-Society for Industrial and Applied
  Mathematics}, \bibinfo{address}{Philadelphia}.
\newblock


\bibitem[\protect\citeauthoryear{Berk, Heidari, Jabbari, Joseph, Kearns,
  Morgenstern, Neel, and Roth}{Berk et~al\mbox{.}}{2017}]%
        {Berk2017}
\bibfield{author}{\bibinfo{person}{R. Berk}, \bibinfo{person}{H. Heidari},
  \bibinfo{person}{S. Jabbari}, \bibinfo{person}{M. Joseph},
  \bibinfo{person}{M. Kearns}, \bibinfo{person}{J. Morgenstern},
  \bibinfo{person}{S. Neel}, {and} \bibinfo{person}{A. Roth}.}
  \bibinfo{year}{2017}\natexlab{}.
\newblock \showarticletitle{A Convex Framework for Fair Regression}.
\newblock \bibinfo{journal}{\emph{FAT-ML Workshop}} (\bibinfo{year}{2017}).
\newblock


\bibitem[\protect\citeauthoryear{Calders, Karim, Kamiran, Ali, and
  Zhang}{Calders et~al\mbox{.}}{2013}]%
        {Calders2013}
\bibfield{author}{\bibinfo{person}{Toon Calders}, \bibinfo{person}{Asim Karim},
  \bibinfo{person}{Faisal Kamiran}, \bibinfo{person}{Wasif Ali}, {and}
  \bibinfo{person}{Xiangliang Zhang}.} \bibinfo{year}{2013}\natexlab{}.
\newblock \showarticletitle{Controlling Attribute Effect in Linear Regression}.
\newblock \bibinfo{journal}{\emph{2013 IEEE 13th International Conference on
  Data Mining}} (\bibinfo{year}{2013}), \bibinfo{pages}{71--80}.
\newblock


\bibitem[\protect\citeauthoryear{Calders and Verwer}{Calders and
  Verwer}{2010}]%
        {Calders2010}
\bibfield{author}{\bibinfo{person}{Toon Calders} {and} \bibinfo{person}{Sicco
  Verwer}.} \bibinfo{year}{2010}\natexlab{}.
\newblock \showarticletitle{Three naive Bayes approaches for
  discrimination-free classification}.
\newblock \bibinfo{journal}{\emph{Data Mining and Knowledge Discovery}}
  \bibinfo{volume}{21}, \bibinfo{number}{2} (\bibinfo{date}{01 Sep}
  \bibinfo{year}{2010}), \bibinfo{pages}{277--292}.
\newblock
\showISSN{1573-756X}
\urldef\tempurl%
\url{https://doi.org/10.1007/s10618-010-0190-x}
\showDOI{\tempurl}


\bibitem[\protect\citeauthoryear{Chiappa and Gillam}{Chiappa and
  Gillam}{2018}]%
        {path_specificCF}
\bibfield{author}{\bibinfo{person}{Silvia Chiappa} {and}
  \bibinfo{person}{Thomas Gillam}.} \bibinfo{year}{2018}\natexlab{}.
\newblock \showarticletitle{Path-Specific Counterfactual Fairness}.
\newblock \bibinfo{journal}{\emph{Proceedings of the AAAI Conference on
  Artificial Intelligence}}  \bibinfo{volume}{33} (\bibinfo{date}{02}
  \bibinfo{year}{2018}).
\newblock
\urldef\tempurl%
\url{https://doi.org/10.1609/aaai.v33i01.33017801}
\showDOI{\tempurl}


\bibitem[\protect\citeauthoryear{Chouldechova}{Chouldechova}{2017}]%
        {criminal_risk}
\bibfield{author}{\bibinfo{person}{Alexandra Chouldechova}.}
  \bibinfo{year}{2017}\natexlab{}.
\newblock \showarticletitle{Fair Prediction with Disparate Impact: A Study of
  Bias in Recidivism Prediction Instruments}.
\newblock \bibinfo{journal}{\emph{Big Data}} \bibinfo{volume}{5},
  \bibinfo{number}{2} (\bibinfo{year}{2017}), \bibinfo{pages}{153--163}.
\newblock
\urldef\tempurl%
\url{https://doi.org/10.1089/big.2016.0047}
\showDOI{\tempurl}
\showeprint{https://doi.org/10.1089/big.2016.0047}
\newblock
\shownote{PMID: 28632438.}


\bibitem[\protect\citeauthoryear{Donini, Oneto, Ben-David, Shawe-Taylor, and
  Pontil}{Donini et~al\mbox{.}}{2018}]%
        {empirical_risk}
\bibfield{author}{\bibinfo{person}{Michele Donini}, \bibinfo{person}{Luca
  Oneto}, \bibinfo{person}{Shai Ben-David}, \bibinfo{person}{John~S
  Shawe-Taylor}, {and} \bibinfo{person}{Massimiliano Pontil}.}
  \bibinfo{year}{2018}\natexlab{}.
\newblock \showarticletitle{Empirical Risk Minimization Under Fairness
  Constraints}.
\newblock In \bibinfo{booktitle}{\emph{Advances in Neural Information
  Processing Systems 31}}, \bibfield{editor}{\bibinfo{person}{S.~Bengio},
  \bibinfo{person}{H.~Wallach}, \bibinfo{person}{H.~Larochelle},
  \bibinfo{person}{K.~Grauman}, \bibinfo{person}{N.~Cesa-Bianchi}, {and}
  \bibinfo{person}{R.~Garnett}} (Eds.). \bibinfo{publisher}{Curran Associates,
  Inc.}, \bibinfo{pages}{2791--2801}.
\newblock
\urldef\tempurl%
\url{http://papers.nips.cc/paper/7544-empirical-risk-minimization-under-fairness-constraints.pdf}
\showURL{%
\tempurl}


\bibitem[\protect\citeauthoryear{Dua and Graff}{Dua and Graff}{2017}]%
        {UCI}
\bibfield{author}{\bibinfo{person}{Dheeru Dua} {and} \bibinfo{person}{Casey
  Graff}.} \bibinfo{year}{2017}\natexlab{}.
\newblock \bibinfo{title}{{UCI} Machine Learning Repository}.
\newblock
\newblock
\urldef\tempurl%
\url{http://archive.ics.uci.edu/ml}
\showURL{%
\tempurl}


\bibitem[\protect\citeauthoryear{Dwork, Hardt, Pitassi, Reingold, and
  Zemel}{Dwork et~al\mbox{.}}{2012}]%
        {individual}
\bibfield{author}{\bibinfo{person}{Cynthia Dwork}, \bibinfo{person}{Moritz
  Hardt}, \bibinfo{person}{Toniann Pitassi}, \bibinfo{person}{Omer Reingold},
  {and} \bibinfo{person}{Richard Zemel}.} \bibinfo{year}{2012}\natexlab{}.
\newblock \showarticletitle{Fairness Through Awareness}
  \emph{(\bibinfo{series}{ITCS '12})}. \bibinfo{publisher}{ACM},
  \bibinfo{address}{New York, NY, USA}, \bibinfo{pages}{214--226}.
\newblock
\showISBNx{978-1-4503-1115-1}
\urldef\tempurl%
\url{https://doi.org/10.1145/2090236.2090255}
\showDOI{\tempurl}


\bibitem[\protect\citeauthoryear{Edwards and Storkey}{Edwards and
  Storkey}{2015}]%
        {Harrison2015}
\bibfield{author}{\bibinfo{person}{Harrison~A Edwards} {and}
  \bibinfo{person}{Amos~J. Storkey}.} \bibinfo{year}{2015}\natexlab{}.
\newblock \showarticletitle{Censoring Representations with an Adversary}.
\newblock \bibinfo{journal}{\emph{CoRR}}  \bibinfo{volume}{abs/1511.05897}
  (\bibinfo{year}{2015}).
\newblock


\bibitem[\protect\citeauthoryear{{Friedler}, {Scheidegger}, and
  {Venkatasubramanian}}{{Friedler} et~al\mbox{.}}{2016}]%
        {school_admission}
\bibfield{author}{\bibinfo{person}{Sorelle~A. {Friedler}},
  \bibinfo{person}{Carlos {Scheidegger}}, {and} \bibinfo{person}{Suresh
  {Venkatasubramanian}}.} \bibinfo{year}{2016}\natexlab{}.
\newblock \showarticletitle{{On the (im)possibility of fairness}}.
\newblock \bibinfo{journal}{\emph{arXiv e-prints}}, Article
  \bibinfo{articleno}{arXiv:1609.07236} (\bibinfo{date}{Sep}
  \bibinfo{year}{2016}), \bibinfo{numpages}{arXiv:1609.07236}~pages.
\newblock
\showeprint[arxiv]{cs.CY/1609.07236}


\bibitem[\protect\citeauthoryear{Hardt, Price, and Srebro}{Hardt
  et~al\mbox{.}}{2016}]%
        {EO1}
\bibfield{author}{\bibinfo{person}{Moritz Hardt}, \bibinfo{person}{Eric Price},
  {and} \bibinfo{person}{Nathan Srebro}.} \bibinfo{year}{2016}\natexlab{}.
\newblock \showarticletitle{Equality of Opportunity in Supervised Learning}
  \emph{(\bibinfo{series}{NIPS'16})}. \bibinfo{publisher}{Curran Associates
  Inc.}, \bibinfo{address}{USA}, \bibinfo{pages}{3323--3331}.
\newblock
\showISBNx{978-1-5108-3881-9}
\urldef\tempurl%
\url{http://dl.acm.org/citation.cfm?id=3157382.3157469}
\showURL{%
\tempurl}


\bibitem[\protect\citeauthoryear{Holland}{Holland}{1986}]%
        {rubinmodel}
\bibfield{author}{\bibinfo{person}{Paul~W. Holland}.}
  \bibinfo{year}{1986}\natexlab{}.
\newblock \showarticletitle{Statistics and Causal Inference}.
\newblock \bibinfo{journal}{\emph{J. Amer. Statist. Assoc.}}
  \bibinfo{volume}{81}, \bibinfo{number}{396} (\bibinfo{year}{1986}),
  \bibinfo{pages}{945--960}.
\newblock
\showISSN{01621459}
\urldef\tempurl%
\url{http://www.jstor.org/stable/2289064}
\showURL{%
\tempurl}


\bibitem[\protect\citeauthoryear{{Johnson}, {Foster}, and {Stine}}{{Johnson}
  et~al\mbox{.}}{2016}]%
        {Impartial}
\bibfield{author}{\bibinfo{person}{Kory~D. {Johnson}}, \bibinfo{person}{Dean~P.
  {Foster}}, {and} \bibinfo{person}{Robert~A. {Stine}}.}
  \bibinfo{year}{2016}\natexlab{}.
\newblock \showarticletitle{{Impartial Predictive Modeling: Ensuring Fairness
  in Arbitrary Models}}.
\newblock \bibinfo{journal}{\emph{arXiv e-prints}}, Article
  \bibinfo{articleno}{arXiv:1608.00528} (\bibinfo{date}{Aug}
  \bibinfo{year}{2016}), \bibinfo{numpages}{arXiv:1608.00528}~pages.
\newblock
\showeprint[arxiv]{stat.ME/1608.00528}


\bibitem[\protect\citeauthoryear{Kamiran and ^^c5^^bdliobait^^c4^^97}{Kamiran
  and ^^c5^^bdliobait^^c4^^97}{2013}]%
        {Kamiran2013ExplainableAN}
\bibfield{author}{\bibinfo{person}{Faisal Kamiran} {and}
  \bibinfo{person}{Indr^^c4^^97 ^^c5^^bdliobait^^c4^^97}.}
  \bibinfo{year}{2013}\natexlab{}.
\newblock \showarticletitle{Explainable and Non-explainable Discrimination in
  Classification}. In \bibinfo{booktitle}{\emph{Discrimination and Privacy in
  the Information Society}}.
\newblock


\bibitem[\protect\citeauthoryear{{Kamiran} and {Calders}}{{Kamiran} and
  {Calders}}{2009}]%
        {Kamiran2009}
\bibfield{author}{\bibinfo{person}{F. {Kamiran}} {and} \bibinfo{person}{T.
  {Calders}}.} \bibinfo{year}{2009}\natexlab{}.
\newblock \showarticletitle{Classifying without discriminating}. In
  \bibinfo{booktitle}{\emph{2009 2nd International Conference on Computer,
  Control and Communication}}. \bibinfo{pages}{1--6}.
\newblock
\urldef\tempurl%
\url{https://doi.org/10.1109/IC4.2009.4909197}
\showDOI{\tempurl}


\bibitem[\protect\citeauthoryear{Kamiran, Zliobaite, and Calders}{Kamiran
  et~al\mbox{.}}{2013}]%
        {kamiran2013}
\bibfield{author}{\bibinfo{person}{Faysal Kamiran}, \bibinfo{person}{Indre
  Zliobaite}, {and} \bibinfo{person}{Toon Calders}.}
  \bibinfo{year}{2013}\natexlab{}.
\newblock \showarticletitle{Quantifying explainable discrimination and removing
  illegal discrimination in automated decision making}.
\newblock \bibinfo{journal}{\emph{Knowledge and Information Systems}}
  \bibinfo{volume}{35}, \bibinfo{number}{3} (\bibinfo{year}{2013}),
  \bibinfo{pages}{613--644}.
\newblock
\showISSN{0219-1377}
\urldef\tempurl%
\url{https://doi.org/10.1007/s10115-012-0584-8}
\showDOI{\tempurl}


\bibitem[\protect\citeauthoryear{Kamishima, Akaho, Asoh, and Sakuma}{Kamishima
  et~al\mbox{.}}{2012}]%
        {Kamishimab}
\bibfield{author}{\bibinfo{person}{Toshihiro Kamishima},
  \bibinfo{person}{Shotaro Akaho}, \bibinfo{person}{Hideki Asoh}, {and}
  \bibinfo{person}{Jun Sakuma}.} \bibinfo{year}{2012}\natexlab{}.
\newblock \showarticletitle{Fairness-Aware Classifier with Prejudice Remover
  Regularizer}. In \bibinfo{booktitle}{\emph{Machine Learning and Knowledge
  Discovery in Databases}}, \bibfield{editor}{\bibinfo{person}{Peter~A. Flach},
  \bibinfo{person}{Tijl De~Bie}, {and} \bibinfo{person}{Nello Cristianini}}
  (Eds.). \bibinfo{publisher}{Springer Berlin Heidelberg},
  \bibinfo{address}{Berlin, Heidelberg}, \bibinfo{pages}{35--50}.
\newblock
\showISBNx{978-3-642-33486-3}


\bibitem[\protect\citeauthoryear{Kilbertus, Rojas-Carulla, Parascandolo, Hardt,
  Janzing, and Sch\"{o}lkopf}{Kilbertus et~al\mbox{.}}{2017}]%
        {Kilbertus2018}
\bibfield{author}{\bibinfo{person}{Niki Kilbertus}, \bibinfo{person}{Mateo
  Rojas-Carulla}, \bibinfo{person}{Giambattista Parascandolo},
  \bibinfo{person}{Moritz Hardt}, \bibinfo{person}{Dominik Janzing}, {and}
  \bibinfo{person}{Bernhard Sch\"{o}lkopf}.} \bibinfo{year}{2017}\natexlab{}.
\newblock \showarticletitle{Avoiding Discrimination Through Causal Reasoning}
  \emph{(\bibinfo{series}{NIPS'17})}. \bibinfo{publisher}{Curran Associates
  Inc.}, \bibinfo{address}{USA}, \bibinfo{pages}{656--666}.
\newblock
\showISBNx{978-1-5108-6096-4}
\urldef\tempurl%
\url{http://dl.acm.org/citation.cfm?id=3294771.3294834}
\showURL{%
\tempurl}


\bibitem[\protect\citeauthoryear{Komiyama, Takeda, Honda, and Shimao}{Komiyama
  et~al\mbox{.}}{2018}]%
        {komiyama}
\bibfield{author}{\bibinfo{person}{Junpei Komiyama}, \bibinfo{person}{Akiko
  Takeda}, \bibinfo{person}{Junya Honda}, {and} \bibinfo{person}{Hajime
  Shimao}.} \bibinfo{year}{2018}\natexlab{}.
\newblock \showarticletitle{Nonconvex Optimization for Regression with Fairness
  Constraints}. In \bibinfo{booktitle}{\emph{Proceedings of the 35th
  International Conference on Machine Learning}}
  \emph{(\bibinfo{series}{Proceedings of Machine Learning Research})},
  \bibfield{editor}{\bibinfo{person}{Jennifer Dy} {and}
  \bibinfo{person}{Andreas Krause}} (Eds.), Vol.~\bibinfo{volume}{80}.
  \bibinfo{publisher}{PMLR}, \bibinfo{address}{Stockholmsm^^c3^^a4ssan,
  Stockholm Sweden}, \bibinfo{pages}{2737--2746}.
\newblock
\urldef\tempurl%
\url{http://proceedings.mlr.press/v80/komiyama18a.html}
\showURL{%
\tempurl}


\bibitem[\protect\citeauthoryear{Kusner, Loftus, Russell, and Silva}{Kusner
  et~al\mbox{.}}{2017}]%
        {Kusner}
\bibfield{author}{\bibinfo{person}{Matt~J Kusner}, \bibinfo{person}{Joshua
  Loftus}, \bibinfo{person}{Chris Russell}, {and} \bibinfo{person}{Ricardo
  Silva}.} \bibinfo{year}{2017}\natexlab{}.
\newblock \showarticletitle{Counterfactual Fairness}.
\newblock In \bibinfo{booktitle}{\emph{Advances in Neural Information
  Processing Systems 30}}, \bibfield{editor}{\bibinfo{person}{I.~Guyon},
  \bibinfo{person}{U.~V. Luxburg}, \bibinfo{person}{S.~Bengio},
  \bibinfo{person}{H.~Wallach}, \bibinfo{person}{R.~Fergus},
  \bibinfo{person}{S.~Vishwanathan}, {and} \bibinfo{person}{R.~Garnett}}
  (Eds.). \bibinfo{publisher}{Curran Associates, Inc.},
  \bibinfo{pages}{4066--4076}.
\newblock
\urldef\tempurl%
\url{http://papers.nips.cc/paper/6995-counterfactual-fairness.pdf}
\showURL{%
\tempurl}


\bibitem[\protect\citeauthoryear{Lipton, McAuley, and Chouldechova}{Lipton
  et~al\mbox{.}}{2018}]%
        {Bengio2018}
\bibfield{author}{\bibinfo{person}{Zachary Lipton}, \bibinfo{person}{Julian
  McAuley}, {and} \bibinfo{person}{Alexandra Chouldechova}.}
  \bibinfo{year}{2018}\natexlab{}.
\newblock \showarticletitle{Does mitigating ML\textquotesingle s impact
  disparity require treatment disparity?}
\newblock In \bibinfo{booktitle}{\emph{Advances in Neural Information
  Processing Systems 31}}, \bibfield{editor}{\bibinfo{person}{S.~Bengio},
  \bibinfo{person}{H.~Wallach}, \bibinfo{person}{H.~Larochelle},
  \bibinfo{person}{K.~Grauman}, \bibinfo{person}{N.~Cesa-Bianchi}, {and}
  \bibinfo{person}{R.~Garnett}} (Eds.). \bibinfo{publisher}{Curran Associates,
  Inc.}, \bibinfo{pages}{8125--8135}.
\newblock
\urldef\tempurl%
\url{http://papers.nips.cc/paper/8035-does-mitigating-mls-impact-disparity-require-treatment-disparity.pdf}
\showURL{%
\tempurl}


\bibitem[\protect\citeauthoryear{Louizos, Swersky, Li, Welling, and
  Zemel}{Louizos et~al\mbox{.}}{2015}]%
        {Louizos2015}
\bibfield{author}{\bibinfo{person}{Christos Louizos}, \bibinfo{person}{Kevin
  Swersky}, \bibinfo{person}{Yujia Li}, \bibinfo{person}{Max Welling}, {and}
  \bibinfo{person}{Richard~S. Zemel}.} \bibinfo{year}{2015}\natexlab{}.
\newblock \showarticletitle{The Variational Fair Autoencoder}.
\newblock \bibinfo{journal}{\emph{CoRR}}  \bibinfo{volume}{abs/1511.00830}
  (\bibinfo{year}{2015}).
\newblock


\bibitem[\protect\citeauthoryear{Nabi and Shpitser}{Nabi and Shpitser}{2017}]%
        {fair_inference_on_outcomes}
\bibfield{author}{\bibinfo{person}{Razieh Nabi} {and} \bibinfo{person}{Ilya
  Shpitser}.} \bibinfo{year}{2017}\natexlab{}.
\newblock \showarticletitle{Fair Inference On Outcomes}.
\newblock \bibinfo{journal}{\emph{Proceedings of AAAI Conference on Artificial
  Intelligence}}  \bibinfo{volume}{2018} (\bibinfo{date}{05}
  \bibinfo{year}{2017}).
\newblock


\bibitem[\protect\citeauthoryear{Nesterov}{Nesterov}{2004}]%
        {Book:Nesterov:2004}
\bibfield{author}{\bibinfo{person}{Yurii Nesterov}.}
  \bibinfo{year}{2004}\natexlab{}.
\newblock \bibinfo{booktitle}{\emph{Introductory Lectures on Convex
  Optimization}}.
\newblock \bibinfo{publisher}{Springer US}, \bibinfo{address}{New York}.
\newblock


\bibitem[\protect\citeauthoryear{Pearl}{Pearl}{2009}]%
        {Pearl:2009}
\bibfield{author}{\bibinfo{person}{Judea Pearl}.}
  \bibinfo{year}{2009}\natexlab{}.
\newblock \bibinfo{booktitle}{\emph{Causality: Models, Reasoning and Inference}
  (\bibinfo{edition}{2nd} ed.)}.
\newblock \bibinfo{publisher}{Cambridge University Press},
  \bibinfo{address}{New York, NY, USA}.
\newblock
\showISBNx{052189560X, 9780521895606}


\bibitem[\protect\citeauthoryear{Pleiss, Raghavan, Wu, Kleinberg, and
  Weinberger}{Pleiss et~al\mbox{.}}{2017}]%
        {Calibration}
\bibfield{author}{\bibinfo{person}{Geoff Pleiss}, \bibinfo{person}{Manish
  Raghavan}, \bibinfo{person}{Felix Wu}, \bibinfo{person}{Jon Kleinberg}, {and}
  \bibinfo{person}{Kilian~Q. Weinberger}.} \bibinfo{year}{2017}\natexlab{}.
\newblock \showarticletitle{On Fairness and Calibration}
  \emph{(\bibinfo{series}{NIPS'17})}. \bibinfo{publisher}{Curran Associates
  Inc.}, \bibinfo{address}{USA}, \bibinfo{pages}{5684--5693}.
\newblock
\showISBNx{978-1-5108-6096-4}
\urldef\tempurl%
\url{http://dl.acm.org/citation.cfm?id=3295222.3295319}
\showURL{%
\tempurl}


\bibitem[\protect\citeauthoryear{{Qureshi}, {Kamiran}, {Karim}, and
  {Ruggieri}}{{Qureshi} et~al\mbox{.}}{2016}]%
        {IPS_discovery}
\bibfield{author}{\bibinfo{person}{Bilal {Qureshi}}, \bibinfo{person}{Faisal
  {Kamiran}}, \bibinfo{person}{Asim {Karim}}, {and} \bibinfo{person}{Salvatore
  {Ruggieri}}.} \bibinfo{year}{2016}\natexlab{}.
\newblock \showarticletitle{{Causal Discrimination Discovery Through Propensity
  Score Analysis}}.
\newblock \bibinfo{journal}{\emph{arXiv e-prints}}, Article
  \bibinfo{articleno}{arXiv:1608.03735} (\bibinfo{date}{Aug}
  \bibinfo{year}{2016}), \bibinfo{numpages}{arXiv:1608.03735}~pages.
\newblock
\showeprint[arxiv]{cs.CY/1608.03735}


\bibitem[\protect\citeauthoryear{Rosenbaum and Rubin}{Rosenbaum and
  Rubin}{1983}]%
        {propensity}
\bibfield{author}{\bibinfo{person}{Paul~R. Rosenbaum} {and}
  \bibinfo{person}{Donald~B. Rubin}.} \bibinfo{year}{1983}\natexlab{}.
\newblock \showarticletitle{{The central role of the propensity score in
  observational studies for causal effects}}.
\newblock \bibinfo{journal}{\emph{Biometrika}} \bibinfo{volume}{70},
  \bibinfo{number}{1} (\bibinfo{date}{04} \bibinfo{year}{1983}),
  \bibinfo{pages}{41--55}.
\newblock
\showISSN{0006-3444}
\urldef\tempurl%
\url{https://doi.org/10.1093/biomet/70.1.41}
\showDOI{\tempurl}
\showeprint{http://oup.prod.sis.lan/biomet/article-pdf/70/1/41/662954/70-1-41.pdf}


\bibitem[\protect\citeauthoryear{Rosenbaum and Rubin}{Rosenbaum and
  Rubin}{1984}]%
        {propensity_strat}
\bibfield{author}{\bibinfo{person}{Paul~R. Rosenbaum} {and}
  \bibinfo{person}{Donald~B. Rubin}.} \bibinfo{year}{1984}\natexlab{}.
\newblock \showarticletitle{Reducing Bias in Observational Studies Using
  Subclassification on the Propensity Score}.
\newblock \bibinfo{journal}{\emph{J. Amer. Statist. Assoc.}}
  \bibinfo{volume}{79}, \bibinfo{number}{387} (\bibinfo{year}{1984}),
  \bibinfo{pages}{516--524}.
\newblock
\urldef\tempurl%
\url{https://doi.org/10.1080/01621459.1984.10478078}
\showDOI{\tempurl}
\showeprint{https://amstat.tandfonline.com/doi/pdf/10.1080/01621459.1984.10478078}


\bibitem[\protect\citeauthoryear{Rubin}{Rubin}{1974}]%
        {causaleffect}
\bibfield{author}{\bibinfo{person}{Donald~B. Rubin}.}
  \bibinfo{year}{1974}\natexlab{}.
\newblock \showarticletitle{Estimating causal effects of treatments in
  randomized and nonrandomized studies}.
\newblock \bibinfo{journal}{\emph{Journal of Educational Psychology}}
  \bibinfo{volume}{66} (\bibinfo{date}{10} \bibinfo{year}{1974}),
  \bibinfo{pages}{688--701}.
\newblock
\urldef\tempurl%
\url{https://doi.org/10.1037/h0037350}
\showDOI{\tempurl}


\bibitem[\protect\citeauthoryear{Rubin}{Rubin}{1985}]%
        {IPW}
\bibfield{author}{\bibinfo{person}{Donald.~B. Rubin}.}
  \bibinfo{year}{1985}\natexlab{}.
\newblock \showarticletitle{The Use of Propensity Scores in Applied {B}ayesian
  Inference}. In \bibinfo{booktitle}{\emph{Bayesian Statistics 2}}.
  \bibinfo{publisher}{North-Holland/Elsevier (Amsterdam; New York)},
  \bibinfo{pages}{463--472}.
\newblock


\bibitem[\protect\citeauthoryear{Russell, Kusner, Loftus, and Silva}{Russell
  et~al\mbox{.}}{2017}]%
        {When_Worlds_Collide}
\bibfield{author}{\bibinfo{person}{Chris Russell}, \bibinfo{person}{Matt~J
  Kusner}, \bibinfo{person}{Joshua Loftus}, {and} \bibinfo{person}{Ricardo
  Silva}.} \bibinfo{year}{2017}\natexlab{}.
\newblock \showarticletitle{When Worlds Collide: Integrating Different
  Counterfactual Assumptions in Fairness}.
\newblock In \bibinfo{booktitle}{\emph{Advances in Neural Information
  Processing Systems 30}}, \bibfield{editor}{\bibinfo{person}{I.~Guyon},
  \bibinfo{person}{U.~V. Luxburg}, \bibinfo{person}{S.~Bengio},
  \bibinfo{person}{H.~Wallach}, \bibinfo{person}{R.~Fergus},
  \bibinfo{person}{S.~Vishwanathan}, {and} \bibinfo{person}{R.~Garnett}}
  (Eds.). \bibinfo{publisher}{Curran Associates, Inc.},
  \bibinfo{pages}{6414--6423}.
\newblock
\urldef\tempurl%
\url{http://papers.nips.cc/paper/7220-when-worlds-collide-integrating-different-counterfactual-assumptions-in-fairness.pdf}
\showURL{%
\tempurl}


\bibitem[\protect\citeauthoryear{Zafar, Valera, Gomez~Rodriguez, and
  Gummadi}{Zafar et~al\mbox{.}}{2017a}]%
        {EO2}
\bibfield{author}{\bibinfo{person}{Muhammad~Bilal Zafar},
  \bibinfo{person}{Isabel Valera}, \bibinfo{person}{Manuel Gomez~Rodriguez},
  {and} \bibinfo{person}{Krishna~P. Gummadi}.}
  \bibinfo{year}{2017}\natexlab{a}.
\newblock \showarticletitle{Fairness Beyond Disparate Treatment \& Disparate
  Impact: Learning Classification without Disparate Mistreatment}
  \emph{(\bibinfo{series}{WWW '17})}. \bibinfo{pages}{1171--1180}.
\newblock
\showISBNx{978-1-4503-4913-0}
\urldef\tempurl%
\url{https://doi.org/10.1145/3038912.3052660}
\showDOI{\tempurl}


\bibitem[\protect\citeauthoryear{Zafar, Valera, Gomez{-}Rodriguez, and
  Gummadi}{Zafar et~al\mbox{.}}{2017b}]%
        {Zafar2015FairnessCM}
\bibfield{author}{\bibinfo{person}{Muhammad~Bilal Zafar},
  \bibinfo{person}{Isabel Valera}, \bibinfo{person}{Manuel Gomez{-}Rodriguez},
  {and} \bibinfo{person}{Krishna~P. Gummadi}.}
  \bibinfo{year}{2017}\natexlab{b}.
\newblock \showarticletitle{Fairness Constraints: Mechanisms for Fair
  Classification}. In \bibinfo{booktitle}{\emph{Proceedings of the 20th
  International Conference on Artificial Intelligence and Statistics, {AISTATS}
  2017, 20-22 April 2017, Fort Lauderdale, FL, {USA}}}.
  \bibinfo{pages}{962--970}.
\newblock
\urldef\tempurl%
\url{http://proceedings.mlr.press/v54/zafar17a.html}
\showURL{%
\tempurl}


\bibitem[\protect\citeauthoryear{Zemel, Wu, Swersky, Pitassi, and Dwork}{Zemel
  et~al\mbox{.}}{2013}]%
        {Zemel}
\bibfield{author}{\bibinfo{person}{Rich Zemel}, \bibinfo{person}{Yu Wu},
  \bibinfo{person}{Kevin Swersky}, \bibinfo{person}{Toni Pitassi}, {and}
  \bibinfo{person}{Cynthia Dwork}.} \bibinfo{year}{2013}\natexlab{}.
\newblock \showarticletitle{Learning Fair Representations}. In
  \bibinfo{booktitle}{\emph{Proceedings of the 30th International Conference on
  Machine Learning}} \emph{(\bibinfo{series}{Proceedings of Machine Learning
  Research})}, \bibfield{editor}{\bibinfo{person}{Sanjoy Dasgupta} {and}
  \bibinfo{person}{David McAllester}} (Eds.), Vol.~\bibinfo{volume}{28}.
  \bibinfo{publisher}{PMLR}, \bibinfo{address}{Atlanta, Georgia, USA},
  \bibinfo{pages}{325--333}.
\newblock
\urldef\tempurl%
\url{http://proceedings.mlr.press/v28/zemel13.html}
\showURL{%
\tempurl}


\bibitem[\protect\citeauthoryear{Zhang and Bareinboim}{Zhang and
  Bareinboim}{2018a}]%
        {EO_causal}
\bibfield{author}{\bibinfo{person}{Junzhe Zhang} {and} \bibinfo{person}{Elias
  Bareinboim}.} \bibinfo{year}{2018}\natexlab{a}.
\newblock \showarticletitle{Equality of Opportunity in Classification: A Causal
  Approach}.
\newblock In \bibinfo{booktitle}{\emph{Advances in Neural Information
  Processing Systems 31}}, \bibfield{editor}{\bibinfo{person}{S.~Bengio},
  \bibinfo{person}{H.~Wallach}, \bibinfo{person}{H.~Larochelle},
  \bibinfo{person}{K.~Grauman}, \bibinfo{person}{N.~Cesa-Bianchi}, {and}
  \bibinfo{person}{R.~Garnett}} (Eds.). \bibinfo{publisher}{Curran Associates,
  Inc.}, \bibinfo{pages}{3671--3681}.
\newblock
\urldef\tempurl%
\url{http://papers.nips.cc/paper/7625-equality-of-opportunity-in-classification-a-causal-approach.pdf}
\showURL{%
\tempurl}


\bibitem[\protect\citeauthoryear{Zhang and Bareinboim}{Zhang and
  Bareinboim}{2018b}]%
        {Zhang2018}
\bibfield{author}{\bibinfo{person}{Junzhe Zhang} {and} \bibinfo{person}{Elias
  Bareinboim}.} \bibinfo{year}{2018}\natexlab{b}.
\newblock \showarticletitle{Fairness in Decision-Making ― The Causal
  Explanation Formula}. In \bibinfo{booktitle}{\emph{Proceedings of AAAI
  Conference on Artificial Intelligence}}. \bibinfo{pages}{2037--2045}.
\newblock
\urldef\tempurl%
\url{https://www.aaai.org/ocs/index.php/AAAI/AAAI18/paper/view/16949}
\showURL{%
\tempurl}


\bibitem[\protect\citeauthoryear{Zhang, Wu, and Wu}{Zhang
  et~al\mbox{.}}{2017}]%
        {CausalFramework}
\bibfield{author}{\bibinfo{person}{Lu Zhang}, \bibinfo{person}{Yongkai Wu},
  {and} \bibinfo{person}{Xintao Wu}.} \bibinfo{year}{2017}\natexlab{}.
\newblock \showarticletitle{A Causal Framework for Discovering and Removing
  Direct and Indirect Discrimination}. In \bibinfo{booktitle}{\emph{Proceedings
  of the Twenty-Sixth International Joint Conference on Artificial
  Intelligence, {IJCAI-17}}}. \bibinfo{pages}{3929--3935}.
\newblock
\urldef\tempurl%
\url{https://doi.org/10.24963/ijcai.2017/549}
\showDOI{\tempurl}


\bibitem[\protect\citeauthoryear{{Zhang}, {Wu}, and {Wu}}{{Zhang}
  et~al\mbox{.}}{2018}]%
        {CausalModelingBased}
\bibfield{author}{\bibinfo{person}{L. {Zhang}}, \bibinfo{person}{Y. {Wu}},
  {and} \bibinfo{person}{X. {Wu}}.} \bibinfo{year}{2018}\natexlab{}.
\newblock \showarticletitle{Causal Modeling-Based Discrimination Discovery and
  Removal: Criteria, Bounds, and Algorithms}.
\newblock \bibinfo{journal}{\emph{IEEE Transactions on Knowledge and Data
  Engineering}} (\bibinfo{year}{2018}), \bibinfo{pages}{1--1}.
\newblock
\urldef\tempurl%
\url{https://doi.org/10.1109/TKDE.2018.2872988}
\showDOI{\tempurl}


\end{thebibliography}

%%
%% If your work has an appendix, this is the place to put it.
%\appendix

%\section{appendix}

\end{document}